\def\method{SeWA}
\theoremstyle{plain}
\newtheorem{theorem}{Theorem}[section]
\newtheorem{lemma}[theorem]{Lemma}
\theoremstyle{definition}
\newtheorem{definition}[theorem]{Definition}
\newtheorem{assumption}[theorem]{Assumption}
\theoremstyle{remark}
\newtheorem{remark}[theorem]{Remark}
\icmltitlerunning{SeWA: Selective Weight Average via Probabilistic Masking}
\begin{document}

\twocolumn[
\icmltitle{\method{}: Selective Weight Average via Probabilistic Masking}



\icmlsetsymbol{equal}{*}

\begin{icmlauthorlist}
\icmlauthor{Peng Wang}{hust}
\icmlauthor{Shengchao Hu}{sjtu}
\icmlauthor{Zerui Tao}{aip}
\icmlauthor{Guoxia Wang}{bidu}\\
\icmlauthor{Dianhai Yu}{bidu}
\icmlauthor{Li Shen}{sysu}
\icmlauthor{Quan Zheng}{hust}
\icmlauthor{Dacheng Tao}{ntu}
\end{icmlauthorlist}

\icmlaffiliation{hust}{School of Mathematics and Statistics, Huazhong University of Science and Technology, Wuhan, China}
\icmlaffiliation{sjtu}{School of Electronic Information and Electrical Engineering, Shanghai Jiao Tong University, Shanghai, China}
\icmlaffiliation{aip}{RIKEN Center for Advanced Intelligence Project, Tokyo, Japan}
\icmlaffiliation{bidu}{Baidu Inc., Beijing, China}
\icmlaffiliation{sysu}{School of Cyber Science \& Technology, Shenzhen Campus of Sun Yat-sen University, Shenzhen, China}
\icmlaffiliation{ntu}{School of Computer and Data Science, Nanyang Technological University, Singapore}

\icmlcorrespondingauthor{Li Shen}{mathshenli@gmail.com}

\icmlkeywords{Machine Learning, ICML}

\vskip 0.3in
]



\printAffiliationsAndNotice{}  

\begin{abstract}
Weight averaging has become a standard technique for enhancing model performance. However, methods such as Stochastic Weight Averaging (SWA) and Latest Weight Averaging (LAWA) often require manually designed procedures to sample from the training trajectory, and the results depend heavily on hyperparameter tuning. To minimize human effort, this paper proposes a simple yet efficient algorithm called Selective Weight Averaging (\method{}), which adaptively selects checkpoints during the final stages of training for averaging. Based on \method{}, we show that only a few points are needed to achieve better generalization and faster convergence. Theoretically, solving the discrete subset selection problem is inherently challenging. To address this, we transform it into a continuous probabilistic optimization framework and employ the Gumbel-Softmax estimator to learn the non-differentiable mask for each checkpoint. Further, we theoretically derive the \method{}'s stability-based generalization bounds, which are sharper than that of SGD under both convex and non-convex assumptions. Finally, solid extended experiments in various domains, including behavior cloning, image classification, and text classification, further validate the effectiveness of our approach. 

\end{abstract}
\section{Introduction}
\label{sec:Intro}

Model averaging has demonstrated significant improvements in both practical applications of deep learning and theoretical investigations into its generalization and optimization. From the perspective of generalization, algorithms based on averaging, such as SWA \cite{izmailov2018averaging}, Exponential Moving Average (EMA) \cite{szegedy2016rethinking}, LAWA \citep{kaddour2022stop,sanyal2023early}, and Trainable Weight Averaging (TWA) \citep{li2022trainable}, have been empirically validated to enhance generalization performance. Moreover, most of these methods have been widely adopted, including large-scale network training \citep{izmailov2018averaging,lu2022improving,sanyal2023early}, adversarial learning \citep{xiao2022stability}, etc. In theoretical research, \citet{hardt2016train} and \citet{xiao2022stability} independently give stability-based generalization bounds for the SWA algorithm in different application contexts, and it shows that, under convex assumptions, the generalization bound of the SWA algorithm is half that of SGD. From an optimization perspective, model averaging can help an optimizer approach convergence to quickly reach the optimal solver when it oscillates near a local minimum. In the 1990s, \citet{polyak1992acceleration} demonstrate that averaging model weights improves convergence speed in the setting of convex loss functions. \citet{sanyal2023early} have empirically verified accelerated convergence using the LAWA algorithm in Large Language Models (LLM) pre-training.

Although averaging algorithms exhibit significant advantages theoretically and practically, they heavily rely on manually designed training frameworks and are highly sensitive to selecting hyperparameters. The SWA algorithm, for instance, revisits historical information at every step, leading to slower convergence. Moreover, it requires adopting a cyclic learning rate schedule to locate low-loss points, which introduces additional manual tuning. In contrast, the LAWA algorithm selects the final point for averaging within the last $k$ epochs. \citet{sanyal2023early} have observed from experiments that the final results are not monotonic with respect to the hyperparameter $k$. Instead, they exhibit a pattern where the model's performance initially increases and then decreases as $k$ grows. The TWA algorithm adaptively learns the weights of each model to achieve optimal performance; however, it requires the orthogonalization of two spaces, which adds additional computational costs. In this paper, we aim to propose an averaging algorithm that balances generalization and convergence while minimizing reliance on manually designed training frameworks.

There are two challenges to reaching this goal. 
(1) Ensure the algorithm can achieve both generalization and convergence speeds. Incorporating training information too early may hinder convergence, while collecting a few checkpoints with similar performance at a later stage may lead to limited improvement in generalization.
(2) The adaptive selection of checkpoints can be formulated as a subset selection task, which is a typical discrete optimization problem. Solving such problems requires handling discrete variables that are often non-differentiable.

To address these challenges, we select an interval with size $k$ in the final stage of training and adaptively choose a few checkpoints within this interval for averaging. This approach avoids interference from early-stage information while leveraging the performance benefits of averaging. Furthermore, $k$ is set sufficiently large to ensure that the selected checkpoints can comprehensively explore the solution space. Then, we formulate the SeWA solving process as the corset selection problem, embedding the discrete optimization objective into a probabilistic space to enable continuous relaxation, which makes gradient-based optimization methods available. Furthermore, we address the non-differentiability of binary variables by employing the Gumbel-softmax estimator. In the generalization analysis, we derive generalization bounds for SeWA under different function assumptions based on stability, which are sharper than other algorithms (see \cref{sample-table}). 

\begin{table}[t]
\caption{Comparison of \method{} with other algorithms on different settings.
Here $T$ represents iterations, and $n$ denotes the size of datasets. $L$, $\beta$, and $c$ are constants. $k$ is the number of averaging. $s\in[0,1]$ corresponds to the probability of mask $m=1$ and $\mathcal{O}_s$ means that this upper bound depends on $s$. We can derive that \method{} has sharper bounds compared to others in different settings, where FWA is the general form of LAWA.}
\label{sample-table}
\vspace{-0.2cm}
\begin{center}
\renewcommand\arraystretch{1.0}
\resizebox{0.48\textwidth}{!}{
\begin{sc}
\begin{tabular}{l|c|c}
\toprule
Settings & Algorithm & Generalization Bound  \\ \midrule 
\multirow{4}{*}{convex} 
& SGD & $2\alpha LT/n$ \cite{hardt2016train}\\
& SWA & $\alpha LT/n$ \cite{xiao2022stability}\\
& FWA & $2\alpha L(T-k/2)/n$ \cite{peng2020dfwa}\\
& EMA & $-$\\
& \method{} & $2\alpha Ls(T-k/2)/n$ \cref{thm:stability-conv}\\ \hline
\multirow{4}{*}{\!\!non-convex\!\!} 
&  SGD & $\mathcal{O}(T^{\frac{c\beta}{1+c\beta}}/n)$ \cite{hardt2016train}\\ 
&  SWA & $\mathcal{O}(T^{\frac{c\beta}{2+c\beta}}/n)$ \cite{wanggeneralization}\\ 
&  FWA & $\mathcal{O}(T^{\frac{c\beta}{k+c\beta}}/n)$ \cite{peng2020dfwa}\\ 
& EMA & $-$\\
& \method{} & $\mathcal{O}_s(T^{\frac{c\beta} {k+c\beta}}/n)$\cref{thm:stability-non-with}\\ 
\bottomrule
\end{tabular}
\end{sc}
}
\end{center}
\vskip -0.2in
\end{table}

\subsection{Our Contributions}

This paper primarily introduces the \method{} algorithm. We theoretically demonstrate the \method{} with better generalization performance and give its optimization method. Extensive experiments have been conducted across various domains, including computer vision, natural language processing, and reinforcement learning, confirming the algorithm's generalization and convergence advantages. Our contributions are listed as follows. 

\begin{itemize}
\item Our approach adaptively selects models for averaging in the final training stages, ensuring strong generalization and faster convergence. Notably, \method{} eliminates the need for manually designed selection frameworks, minimizing biases toward specific scenarios.

\item We analyze the impact of masks on generalization theory in expectation and derive a stability-based generalization upper bound, showing advantages over SGD bounds under the different function assumptions.

\item We propose a solvable optimization framework by transforming the discrete problem into a continuous probabilistic space and addressing the non-differentiability of binary variables using the Gumbel-Softmax estimator during optimization.

\item We empirically demonstrate the outstanding performance of our algorithm in multiple domains, including behavior cloning, image classification, and text classification. In particular, the \method{} achieves comparable performance using only a few selected points, matching or exceeding the performance of other methods that require many times more points.
\end{itemize}

\section{Related Work}
\label{sec:ReW}

\textbf{Weight averaging algorithm.}
Model averaging methods, initially introduced in convex optimization \cite{ruppert1988efficient, polyak1992acceleration,li2023deep}, have been widely used in various areas of deep learning and have shown their advantages in generalization and convergence. Subsequently, with the introduction of SWA \cite{izmailov2018averaging}, which averages the weights along the trajectory of SGD, the model's generalization is significantly improved. Further modifications have been proposed, including the Stochastic Weight Average Density (SWAD) \cite{cha2021swad}, which averages checkpoints more densely, leading to the discovery of flatter minima associated with better generalization. Additionally, Trainable Weight Averaging (TWA) \cite{li2022trainable} has improved the efficiency of SWA by employing trainable averaging coefficients. What's more, other approaches like Exponential Moving Average (EMA) \cite{szegedy2016rethinking} and finite averaging algorithms, such as LAWA \cite{kaddour2022stop, sanyal2023early}, which averages the last $k$ checkpoints from running a moving window at a predetermined interval, employ different strategies to average checkpoints. These techniques have empirically shown faster convergence and better generalization. In meta-learning, Bayesian Model Averaging (BMA) is used to reduce the uncertainty of the model \cite{huang2020meta}. However, these different algorithms often require manual design of averaging strategies and are only applicable to some specific tasks, imposing an additional cost on the training.

\textbf{Stability Analysis.}
Stability analysis is a fundamental theoretical tool for studying the generalization ability of algorithms by examining their stability \citep{devroye1979distribution, bousquet2002stability, mukherjee2006learning, shalev2010learnability}. Based on this, \citet{hardt2016train} use the algorithm stability to derive generalization bounds for SGD, inspiring series works \cite{charles2018stability, zhou2018generalization, yuan2019stagewise, lei2020sharper}. This analysis framework has been extended to various domains, such as online learning \citep{yang2021simple}, adversarial training \citep{xiao2022stability}, decentralized learning \citep{zhu2023stability}, and federated learning \citep{sun2023understanding, sun2023mode}. Although uniform sampling is a standard operation for building stability boundaries, selecting the initial point and sampling without replacement also significantly affects generalization and has been investigated in \citet{shamir2016without, kuzborskij2018data}. For the averaging algorithm, \citet{hardt2016train} and \citet{xiao2022stability} analyze the generalization performance of SWA and establish stability bounds for the algorithm under the setting of convex and sampling with replacement. The main focus of this paper is the generalization and construction of stability bounds for \method{} in both convex and non-convex settings.

\textbf{Mask Learning.}
The general approach involves transforming the discrete optimization problem into a continuous one using probabilistic reparameterization, thereby enabling gradient-based optimization. \citet{coreset} solve the coreset selection problem based on this by using a Policy Gradient Estimator (PGE) for a bilevel optimization objective. \citet{zhangefficient} propose a probabilistic masking method that improves diffusion model efficiency by skipping redundant steps.
While the PGE method may suffer from high variance and unstable training, we solve the mask learning problem using the Gumbel-softmax reparameterization \citep{jang2017categorical,maddison2017the}.
In this paper, we aim to adaptively select checkpoints for averaging to enhance model generalization and convergence, addressing the issue of unstable training.
\section{Methodology}
\label{sec:method}
In  this section, we first give the problem setup and then introduce our proposed \method{} and several terminologies.

\subsection{Problem Setting}
Let $F(w, z)$ be a loss function that measures the loss of the predicted value of the parameter $w$ at a given sample $z$. There is an unknown distribution $\mathcal{D}$ over examples from some space $\mathcal{Z}$, and a sample dataset $S=(z_1, z_2,..., z_n)$ of $n$ examples i.i.d. drawn from $\mathcal{D}$. Then the \emph{population risk} and \emph{empirical risk} are defined as 
\begin{equation}
\textbf{Population Risk:}\quad   \min_w \{R_\mathcal{D}[w]=E_ {z \sim \mathcal{D}}F(w; z) \} \label{prm}
\end{equation}
\begin{equation}
\textbf{Empirical Risk:}\quad  \min_w \{ R_S [w]= \frac{1}{n} \sum_{i=1}^{n}F(w; z_i) \}.\label{erm}
\end{equation}
 The generalization error of a model $w$ is the difference $\epsilon_{gen}=R_\mathcal{D}[w] - R_S [w]$. Moreover, we assume function $F$ satisfies the following \emph{Lipschitz} and \emph{smoothness} assumption.

\begin{assumption}[$L$-Lipschitz]
\label{$L$-Lipschitz}
A differentiable function $F: R^d \rightarrow R$ satisfies the $L$-Lipschitz property, i.e., for $\forall u, v \in R^d, \Vert F(u)-F(v)\Vert \leq L\Vert u-v\Vert$, which implies that the gradient satisfies $\Vert\nabla F(u)\Vert \leq L$.
\end{assumption}

\begin{assumption}[$\beta$-smooth]
\label{beta-smooth}
A differentiable function $F: R^d \rightarrow R$ is $\beta$-smooth, i.e., for $\forall u, v \in R^d$, we have $\Vert\nabla F(u)-\nabla F(v)\Vert \leq \beta\Vert u-v\Vert$.
\end{assumption}

Assumptions \ref{$L$-Lipschitz} and \ref{beta-smooth} are often used to establish stability bounds for algorithms and are crucial conditions for analyzing the model's generalization performance. 

\begin{assumption}[Convex function]
\label{Convex function}
A differentiable function $F: R^d \rightarrow R$ is convex, i.e., for $\forall u, v \in R^d, F(u) \leq F(v) + \langle\nabla F(u), u-v \rangle.$
\end{assumption}

Different function assumptions correspond to different expansion properties, which will be discussed in Lemma \ref{lemma}.

\subsection{SGD and \method{} Algorithm}
For the target function $F$ and the given dataset $S=(z_1, z_2, \cdots, z_n)$, we consider the SGD's general update rule as Eq.~\eqref{SGD-rules}. \method{} algorithm adaptively selects a small number of points for averaging among the last k points of the SGD's training trajectory. It is shown in Eq.~\eqref{\method{}-rules}.

\textbf{SGD} is formulated as
    \begin{equation}\label{SGD-rules}
     w_{t+1} = w_{t} - \alpha \nabla_w F(w_{t},z_{i_t}),
    \end{equation}   
where $\alpha$ is the fixed learning rate, $z_{i_t}$ is the sample chosen in iteration $t$. We choose $z_{i_t}$ from dataset $S$ in a standard way, picking $i_t \sim Uniform\left\{1, \cdots, n \right\}$ at each step. This setting is commonly explored in analyzing the stability \cite{hardt2016train,xiao2022stability}. 

\textbf{\method{}} is formulated as 
\begin{equation}\label{\method{}-rules}
    \bar{w}^{K}_{T}=\frac{1}{K} \sum_{i=T-k+1}^{T} m_{i}w_{i},
\end{equation}
where the mask $m_i \in \left\{0, 1 \right\}$ and $m_i = 1$ indicating the $i$-th weight is selected for averaging and otherwise excluded; the $K = k_{m_i=1}$ corresponds to the number of models selected for averaging. The \method{} algorithm focuses on adaptively selecting a small number of points from the last $k$ steps of the SGD training trajectory for averaging, aiming to improve generalization and accelerate convergence. We show the convergence performance of different
models in Figure \ref{fig:enter-label}.
\begin{figure}
    \centering
    \includegraphics[width=1.\linewidth]{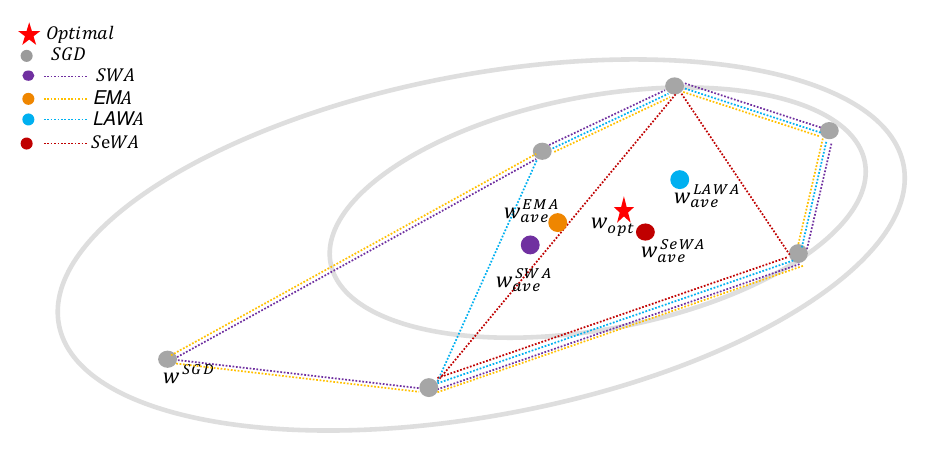}
    \caption{Comparison of \method{} with different models on convergence performance.}
    \label{fig:enter-label}
\end{figure}

\subsubsection{The Expansive Properties}
\begin{lemma}\label{lemma}
Assume that the function $F$ is $\beta$-smooth. Then, \\
{\bf (1). (non-expansive)} If $F$ is convex, for any $\alpha \leq \frac{2}{\beta}$, we have $\Vert w_{T+1}-w_{T+1}^{\prime} \Vert \leq \Vert w_{T}-w_{T}^{\prime}\Vert$; \\
{\bf (2). ($(1\!+\!\alpha\beta)$-expansive)} If $F$ is non-convex, for any $\alpha$, we have $\Vert w_{T+1}\!-\!w_{T+1}^{\prime} \Vert \!\leq\! (1\!+\!\alpha\beta)\Vert w_{T}\!-\!w_{T}^{\prime}\Vert$.
\end{lemma}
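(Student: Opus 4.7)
The plan is to start from the single-step SGD update written for two coupled trajectories $\{w_t\}$ and $\{w_t'\}$ that use the \emph{same} sample $z_{i_t}$ at step $T$, so that
\begin{equation*}
w_{T+1}-w_{T+1}' = (w_T-w_T') - \alpha\bigl(\nabla F(w_T)-\nabla F(w_T')\bigr).
\end{equation*}
Both claims then reduce to controlling the right-hand side under the stated assumptions.

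For part (2), I would simply apply the triangle inequality followed by $\beta$-smoothness (Assumption~\ref{beta-smooth}), which gives $\|\nabla F(w_T)-\nabla F(w_T')\| \leq \beta \|w_T-w_T'\|$. Combining these two bounds yields $\|w_{T+1}-w_{T+1}'\| \leq (1+\alpha\beta)\|w_T-w_T'\|$ with no restriction on $\alpha$. This is the easy direction and requires no convexity.

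For part (1), the triangle inequality is too weak and the key tool is \emph{co-coercivity} of the gradient of a convex $\beta$-smooth function, namely
\begin{equation*}
\langle \nabla F(u)-\nabla F(v),\, u-v\rangle \;\geq\; \tfrac{1}{\beta}\,\|\nabla F(u)-\nabla F(v)\|^2,
\end{equation*}
which follows from combining convexity (Assumption~\ref{Convex function}) with $\beta$-smoothness. I would square the displayed identity for $w_{T+1}-w_{T+1}'$, expand into three terms, and substitute co-coercivity into the cross term. The coefficient of $\|\nabla F(w_T)-\nabla F(w_T')\|^2$ then becomes $\alpha(\alpha - 2/\beta)$, which is non-positive precisely when $\alpha \leq 2/\beta$. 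Dropping this non-positive term gives $\|w_{T+1}-w_{T+1}'\|^2 \leq \|w_T-w_T'\|^2$, and taking the square root closes the argument.

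The main obstacle, and the only genuinely non-routine step, is isolating co-coercivity as the right inequality: the naive triangle-inequality route yields only $(1+\alpha\beta)$-expansivity and misses the non-expansive conclusion entirely, so one has to recognize that convexity must enter through the inner product rather than through the norm. Everything else is algebra. I would also briefly note that since the coupling assumes the same $z_{i_t}$ is drawn at step $T$, the bound applies step-wise along the SGD trajectory defined by Eq.~\eqref{SGD-rules}, which is exactly the form needed later for the stability analysis feeding into Theorems~\ref{thm:stability-conv} and \ref{thm:stability-non-with}.
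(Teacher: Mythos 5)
Your proposal is correct and follows essentially the same route as the paper's proof in Appendix~\ref{pro-lemma}: part (2) via triangle inequality plus $\beta$-smoothness, and part (1) by squaring the update, invoking co-coercivity $\langle\nabla F(u)-\nabla F(v),u-v\rangle\geq\frac{1}{\beta}\Vert\nabla F(u)-\nabla F(v)\Vert^2$, and dropping the resulting non-positive term $-(\frac{2\alpha}{\beta}-\alpha^2)\Vert\nabla F(w_T)-\nabla F(w_T')\Vert^2$ when $\alpha\leq\frac{2}{\beta}$. No gaps.
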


Lemma \ref{lemma} tells us that the gradient update becomes $non$-expansive when the function is convex and the step size is small, which implies that the algorithm will always converge to the optimum in this setting. However, although this is not guaranteed when the function is non-convex, it is required that the gradient updates cannot be overly expansive if the algorithm is stable. The proof of Lemma \ref{lemma} is deferred to Appendix \ref{pro-lemma}. Additional relevant results can be found in \citet{hardt2016train,xiao2022stability}. 

\subsection{Stability and Generalization Definition}
\citet{hardt2016train} link the \emph{uniform stability} of the learning algorithm with the expected generalization error bound in research of SGD's generalization. The expected generalization error of a model $w = A_S$ trained by a certain randomized algorithm $A$ is defined as 
    \begin{equation}\label{D-gen}
\mathbb{E}_{S,A}\left[R_{S}\left[A_S\right]-R_\mathcal{D}\left[A_S\right]\right]. 
    \end{equation}
Here, the expectation is taken only over the internal randomness of $A$. 
Next, we introduce the \emph{uniform stability}.

\begin{definition}[$\epsilon$-Uniformly Stable]
A randomized algorithm $A$ is $\epsilon$-uniformly stable if for all data sets $S, S^{\prime} \in Z^{n}$ such that $S$ and $S^{\prime}$ differ in at most one example, we have
    \begin{equation}\label{E-stab}
     \mathop{sup}\limits_{z\in Z}\left\{\mathbb{E}_{A}\left[F(A_S;z)-F(A_{S^{\prime}};z)\right] \right\} \leq \epsilon.
    \end{equation}
\end{definition}

\begin{theorem}{\rm (Generalization in Expectation \citep[Theorem 2.2]{hardt2016train})}
Let $A$ be $\epsilon$-uniformly stable. Then,
    \begin{equation}\label{E-gen}
     \vert\mathbb{E}_{S,A}\left[R_{S}\left[A_S\right]-R_\mathcal{D}\left[A_S\right]\right]\vert \leq \epsilon.
    \end{equation}
\end{theorem}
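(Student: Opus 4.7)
The plan is to use the classical ``ghost sample'' symmetrization argument of Bousquet–Elisseeff / Hardt–Recht–Singer, turning the generalization gap into an expression that the uniform stability hypothesis bounds term by term.

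First I would introduce an independent copy $\widetilde{S}=(\widetilde{z}_1,\dots,\widetilde{z}_n)$ drawn i.i.d.\ from $\mathcal{D}$, independent of $S$ and of the algorithm's randomness. Since $A_S$ depends only on $S$, linearity of expectation gives
\[
\mathbb{E}_{S,A}\left[R_{\mathcal{D}}[A_S]\right]
= \mathbb{E}_{S,\widetilde{S},A}\!\left[\frac{1}{n}\sum_{i=1}^{n} F(A_S;\widetilde{z}_i)\right],
\]
while by definition $\mathbb{E}_{S,A}[R_S[A_S]] = \mathbb{E}_{S,A}[\frac{1}{n}\sum_i F(A_S;z_i)]$. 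Subtracting yields
\[
\mathbb{E}_{S,A}\!\left[R_S[A_S] - R_{\mathcal{D}}[A_S]\right]
= \frac{1}{n}\sum_{i=1}^{n}\mathbb{E}_{S,\widetilde{S},A}\!\left[F(A_S;z_i)-F(A_S;\widetilde{z}_i)\right].
\]

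Next, for each coordinate $i$, I would define the swapped dataset $S^{(i)}$ obtained from $S$ by replacing $z_i$ with $\widetilde{z}_i$. The key observation is that, since $z_i$ and $\widetilde{z}_i$ are i.i.d., the joint law of $(S,\widetilde{S})$ is invariant under the swap $z_i\leftrightarrow\widetilde{z}_i$. Applying this renaming inside the expectation gives
\[
\mathbb{E}_{S,\widetilde{S},A}\!\left[F(A_S;\widetilde{z}_i)\right]
= \mathbb{E}_{S,\widetilde{S},A}\!\left[F(A_{S^{(i)}};z_i)\right].
\]
Consequently, each summand becomes $\mathbb{E}[F(A_S;z_i) - F(A_{S^{(i)}};z_i)]$. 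Because $S$ and $S^{(i)}$ differ in exactly one example, the $\epsilon$-uniform stability hypothesis \eqref{E-stab}, together with taking the supremum over $z\in\mathcal{Z}$, bounds this summand by $\epsilon$ in absolute value.

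Finally, I would average the $n$ bounds and take the absolute value on the outside, obtaining the stated inequality. The only nuance is the two-sided bound: the definition in \eqref{E-stab} only controls the signed difference $F(A_S;z)-F(A_{S'};z)$, but exchanging the roles of $S$ and $S^{(i)}$ gives the matching bound in the opposite direction, hence the absolute value in \eqref{E-gen}. The main conceptual step, and the only one that requires care, is the symmetrization/renaming identity in the second paragraph; everything else is linearity of expectation and a direct invocation of the stability definition.
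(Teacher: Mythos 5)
Your proof is correct and is essentially the argument the paper points to: it only cites this result from Hardt et al.\ (Theorem 2.2), whose proof (following Bousquet--Elisseeff, Lemma 7) is exactly this ghost-sample symmetrization with the per-coordinate swap $z_i \leftrightarrow \widetilde{z}_i$ and a term-by-term application of the stability definition. Your handling of the conditioning (applying the $\sup_z$ bound pointwise in $(S,\widetilde S)$ before taking the outer expectation) and of the two-sided bound is also the standard and correct way to close the argument.
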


This theorem clearly states that if an algorithm has uniform stability, then its generalization error is small. In other words, uniform stability implies \emph{generalization in expectation} \cite{hardt2016train}. Above proof is based on \citet[Lemma 7]{bousquet2002stability} and very similar to \citet[Lemma 11]{shalev2010learnability}.

\section{Generalization Analysis of \method{}}
This section provides two theorems that give upper bounds on generalization in the convex and non-convex settings, respectively. First, a critical lemma is provided for building stability bound in the convex function setting.

\begin{lemma}\label{convex-basic}
Let $\bar{w}_{T}^K$ and $\bar{w}_{T}^{K\prime}$ denote the corresponding outputs of \method{} after running $T$ steps on the datasets $S$ and $S^{\prime}$, which have $n$ samples but only one different. Assume that function $F(\cdot,z)$ satisfies Assumptions \ref{$L$-Lipschitz} and \ref{Convex function} for a fixed example $z\in\mathcal{Z}$, then we have
 \begin{equation}\label{eq_convex-basic}
\mathbb{E}\vert F(\bar{w}_{T}^K;z)-F(\bar{w}_T^{K\prime};z)\vert \leq sL \mathbb{E} [\bar{\delta}_T],
 \end{equation} 
where $s=\sup_{T-k+1\leq i\leq T} s_i$, where $s_i$ denotes the probability of $m_i=1$ and $\bar{\delta}_T=\frac{1}{k}\sum_{i=T-k+1}^{T}\Vert w_i-w_i^{\prime}\Vert$. 
\end{lemma}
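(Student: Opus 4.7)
The plan is to chain three ingredients: the $L$-Lipschitz property of $F$, the triangle inequality applied inside the averaging rule, and a probabilistic averaging step over the random masks. First, by Assumption~\ref{$L$-Lipschitz} the function-value gap is dominated by the parameter gap,
\[
|F(\bar{w}_T^K;z) - F(\bar{w}_T^{K\prime};z)| \;\leq\; L\,\|\bar{w}_T^K - \bar{w}_T^{K\prime}\|,
\]
so it suffices to control $\mathbb{E}\|\bar{w}_T^K - \bar{w}_T^{K\prime}\|$.

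Next, I would unpack Eq.~\eqref{\method{}-rules} under the standard coupling in which both runs on $S$ and $S'$ share the same realization of the mask sequence $\{m_i\}_{i=T-k+1}^{T}$. Then the averaging map is linear in the iterates, giving
\[
\bar{w}_T^K - \bar{w}_T^{K\prime} \;=\; \frac{1}{K}\sum_{i=T-k+1}^{T} m_i\,(w_i - w_i'),
\]
and the triangle inequality yields
\[
\|\bar{w}_T^K - \bar{w}_T^{K\prime}\| \;\leq\; \frac{1}{K}\sum_{i=T-k+1}^{T} m_i\,\|w_i - w_i'\|.
\]

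The third step is to take expectation, using that the masks are drawn independently of the SGD trajectory and that $\mathbb{E}[m_i] = s_i \leq s$ for every index in the window. Pulling the $s$ factor out in front and rearranging the normalization so that the remaining sum averages over all $k$ indices gives
\[
\mathbb{E}\|\bar{w}_T^K - \bar{w}_T^{K\prime}\| \;\leq\; s\cdot\mathbb{E}\!\left[\frac{1}{k}\sum_{i=T-k+1}^{T}\|w_i - w_i'\|\right] \;=\; s\,\mathbb{E}[\bar{\delta}_T],
\]
and multiplying by $L$ delivers the lemma.

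The main obstacle is the random denominator $K = \sum_j m_j$, which is coupled to each $m_i$ so that $\mathbb{E}[m_i/K]$ does not factorize as $\mathbb{E}[m_i]\mathbb{E}[1/K]$. To make the final averaging step rigorous one either conditions on $K$ being fixed at its expected count (so that $\sum_j s_j \approx sk$ absorbs the normalization), or invokes a self-normalization identity specific to the Gumbel-Softmax sampling scheme used to draw the masks. Convexity (Assumption~\ref{Convex function}) appears listed as a hypothesis but is not essential in the Lipschitz route sketched above; I suspect the authors bypass the $K$-normalization issue by invoking Jensen's inequality via convexity, i.e.\ $F(\bar{w}_T^K;z) \leq \frac{1}{K}\sum_i m_i F(w_i;z)$, and then bounding $|F(w_i;z) - F(w_i';z)| \leq L\|w_i - w_i'\|$ before taking expectations, which is the cleaner alternative I would pursue if the direct route stalls.
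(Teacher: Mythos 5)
Your proposal follows essentially the same route as the paper's proof: Lipschitz continuity to reduce to the parameter gap, the triangle inequality inside the masked average, and then an expectation over the masks using $\mathbb{E}[m_i]=s_i\le s$. Two remarks are worth making. First, your suspicion about convexity is not borne out: the paper's proof of this lemma uses only the Lipschitz assumption and the mask expectation; Assumption~\ref{Convex function} is listed in the hypothesis but plays no role until Theorem~\ref{thm:stability-conv}, where the non-expansiveness of the SGD update is needed. Second, the obstacle you flag --- the random denominator $K=\sum_j m_j$, for which $\mathbb{E}[m_i/K]$ does not factor as $s_i/k$ --- is genuine, and the paper does not resolve it: its second inequality simply writes
\[
\mathbb{E}_{m}\Vert \bar{w}_{T}^K - \bar{w}_T^{K\prime}\Vert \;\le\; \frac{1}{k}\sum_{i=T-k+1}^{T}s_i\,\mathbb{E}\Vert w_i-w_i^{\prime}\Vert,
\]
i.e.\ it silently replaces $1/K$ by $1/k$ when passing the expectation through. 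This step is not justified as stated; indeed, when all $s_i$ are small one typically has $K=1$ on the event that anything is selected, so $\mathbb{E}[m_i/K]\approx s_i$ rather than $s_i/k$, and the resulting bound can be off by a factor of order $k$ (and the estimator is even undefined on the event $K=0$). Your proposed fixes --- conditioning on $K$ at a deterministic value, or reinterpreting the normalizer as the fixed window size $k$ --- are exactly what would be needed to make the lemma rigorous, so in this respect your write-up is more careful than the paper's.
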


\begin{proof} We establish a generalization bound for the algorithm based on stability, where the $L$-Lipschitz transforms the problem into bounding the parameter differences. 
  \begin{equation*}
  \begin{aligned}
  &\mathbb{E}_{z,m,A}\vert F(\bar{w}_{T}^K;z)-F(\bar{w}_T^{K\prime};z) \vert \leq \!L\mathbb{E}_{m,A}\Vert \bar{w}_{T}^K - \bar{w}_T^{K\prime}\Vert\\
  & \leq L \!\left(\!\frac{1}{k}\!\sum_{i=T-k+1}^{T}\!\!s_i \mathbb{E}_{A}\Vert w_i-w_i^{\prime}\Vert+\frac{1}{k}\!\sum_{i=T-k+1}^{T}\!\!(1-s_i)\!\cdot \!0\!\right)\\
  & \leq sL \mathbb{E}_{A} [\bar{\delta}_T], 
  \end{aligned}
 \end{equation*}
where the first inequality comes from $L$-Lipschitz assumption, the second inequality is based on taking the expectation for mask $m_i$, and the last inequality because of $s=\sup_{T-k+1\leq i\leq T} s_i$.
\end{proof}

The Lemma \ref{convex-basic} further decomposes the problem of selecting points for averaging within the last $k$ steps into averaging over the last $k$ steps multiplied by the probability $s_i$ of each step by taking an expectation over the mask. Next, we give the generalization bound for \method{} in the convex setting combined with Lemma \ref{convex-basic}.

\begin{theorem}\label{thm:stability-conv}
Suppose that we run \method{} with constant step sizes $\alpha \leq \frac{2}{\beta}$ for $T$ steps, where each step samples $z \in \mathcal{Z}$ uniformly with replacement. If function $F$ satisfies Assumptions \ref{$L$-Lipschitz}, \ref{beta-smooth} and \ref{Convex function}. \method{} has uniform stability of
\begin{equation}
  \epsilon_{gen} \leq \frac{2\alpha L^2 s}{n} \left(T - \frac{k}{2} \right),
 \end{equation}
where $s=\sup_{T-k+1\leq i\leq T} s_i$.
\end{theorem}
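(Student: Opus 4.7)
The proof reduces, via Lemma \ref{convex-basic}, to controlling the averaged parameter discrepancy $\mathbb{E}[\bar{\delta}_T]=\frac{1}{k}\sum_{i=T-k+1}^{T}\mathbb{E}\|w_i-w_i'\|$ between two SGD trajectories $\{w_t\}$ and $\{w_t'\}$ run on neighboring datasets $S$ and $S'$ that differ in one sample, using the same random indices at each step. I plan three stages: (i) derive a per-step recursion for $\mathbb{E}[\delta_t]=\mathbb{E}\|w_t-w_t'\|$ via the coupling/sampling argument of \citet{hardt2016train}; (ii) unroll the recursion and average over the last $k$ iterates; (iii) combine with Lemma \ref{convex-basic} and invoke the generalization-in-expectation theorem.

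For stage (i), I condition on the sampled index $i_t$. With probability $1-1/n$, $i_t$ points to an example present in both $S$ and $S'$; since $F(\cdot,z_{i_t})$ is convex and $\beta$-smooth and $\alpha\le 2/\beta$, the non-expansive part of Lemma \ref{lemma} gives $\delta_{t+1}\le\delta_t$. With probability $1/n$, $i_t$ picks the differing sample; the triangle inequality together with the $L$-Lipschitz bound $\|\nabla F\|\le L$ on both gradients yields $\delta_{t+1}\le\delta_t+2\alpha L$. Taking expectation over $i_t$ gives
\begin{equation*}
\mathbb{E}[\delta_{t+1}]\;\le\;\mathbb{E}[\delta_t]+\frac{2\alpha L}{n},
\end{equation*}
and unrolling from $\delta_0=0$ produces $\mathbb{E}[\delta_t]\le \frac{2\alpha L\,t}{n}$.

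For stage (ii), averaging this linear bound over $i=T-k+1,\dots,T$ and using the arithmetic-series identity $\sum_{i=T-k+1}^{T} i = k\bigl(T-(k-1)/2\bigr)$ gives $\mathbb{E}[\bar{\delta}_T]\le \frac{2\alpha L}{n}\bigl(T-\tfrac{k-1}{2}\bigr)$, which matches the claimed $\frac{2\alpha L}{n}(T-k/2)$ up to a lower-order term of order $\alpha L/n$. Plugging into Lemma \ref{convex-basic} shows that \method{} is $\epsilon$-uniformly stable with $\epsilon=\frac{2\alpha L^2 s}{n}(T-k/2)$; the generalization-in-expectation theorem then transfers this directly to the bound on $\epsilon_{gen}$.

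The main obstacle is the clean execution of the coupling step: on the shared-sample event both trajectories must perform the identical update $w\mapsto w-\alpha\nabla F(w;z_{i_t})$ of a single convex $\beta$-smooth function so that Lemma \ref{lemma} applies verbatim, while on the differing-sample event the two gradients are evaluated at different samples and we can only use the gradient-norm bound from Assumption \ref{$L$-Lipschitz}. The masking enters only through the factor $s=\sup_i s_i$ already absorbed by Lemma \ref{convex-basic}, so no further probabilistic argument over the mask is needed once the SGD-style stability recursion is in hand.
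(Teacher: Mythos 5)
Your proof is correct and reaches the same intermediate bound as the paper, but it organizes the recursion differently. The paper works directly with the running average: it writes $\hat{w}^{k}_{T} = \hat{w}^{k}_{T-1} - \frac{1}{k}\sum_{i=T-k+1}^{T}\alpha_i\nabla F(w_{i-1},z_i)$ (Eq.~\eqref{pro-FWA-update}), bounds the cumulative gradient difference term by term (splitting off the step-$T$ gradient from the remaining $k-1$ gradients), and obtains the one-step recursion $\mathbb{E}[\bar{\delta}_T]\le\mathbb{E}[\bar{\delta}_{T-1}]+\frac{2L}{nk}\sum_{i=T-k+1}^{T}\alpha_i$, which it then unrolls with truncated sums for the early windows. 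You instead run the standard Hardt--Recht per-iterate recursion $\mathbb{E}[\delta_{t+1}]\le\mathbb{E}[\delta_t]+2\alpha L/n$ and only average over the last $k$ iterates at the end. The two routes are arithmetically equivalent; yours is more elementary and avoids the paper's double-sum bookkeeping, while the paper's formulation is the one that generalizes to the non-convex case (Theorem \ref{thm:stability-non-with}), where the expansion factor couples the averaged iterates and a per-iterate unrolling no longer suffices. One point worth noting: your honest accounting gives $\mathbb{E}[\bar{\delta}_T]\le\frac{2\alpha L}{n}\bigl(T-\frac{k-1}{2}\bigr)$, which is an additive $\frac{\alpha L}{n}$ weaker than the stated $\frac{2\alpha L}{n}\bigl(T-\frac{k}{2}\bigr)$; the paper's own unrolling in Appendix \ref{proof-thm-con-with} in fact yields exactly the same $T-\frac{k-1}{2}$ before silently rounding it to $T-\frac{k}{2}$, so this half-step slack is shared by both arguments and is immaterial to the claimed rate.
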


\begin{proof}[Proof sketch]
We can first establish stability bounds for the last $k$ points of the averaging algorithm and then use Lemma \ref{convex-basic} to obtain bounds for \method{}. \\
First, based on Eq.~\eqref{pro-FWA-update}, we divide cumulative gradient into two parts: $\Vert \nabla F(w_{T-1},z_{T}) - \nabla F(w^{\prime}_{T-1},z_{T}) \Vert$ and $\frac{1}{k}\sum_{i=T-k+1}^{T-1} \alpha_i \Vert \nabla F(w_{i-1},z_{i}) - \nabla F(w^{\prime}_{i-1},z_{i}) \Vert$. \\
{\bf(1)} Bound $\Vert \nabla F(w_{T-1},z_{T}) - \nabla F(w^{\prime}_{T-1},z_{T}) \Vert$. On one hand, we consider the different samples $z$ and $z^{\prime}$ is selected with probability $\frac{1}{n}$, we only need to use $L$ to bound $\nabla F(w_{T-1},z_{T})$ and $\nabla F^{\prime}(w^{\prime}_{T-1},z_{T}^{\prime})$ respectively. On the other hand, with probability $1-\frac{1}{n}$ that the same samples $z=z^{\prime}$ is selected, we can use the \emph{non}-expansive update rule from Lemma \ref{lemma}, based on the fact that the objective function is convex and $\alpha\leq\frac{2}{\beta}$. In summary, $\Vert \nabla F(w_{T-1},z_{T}) - \nabla F^{\prime}(w_{T-1}^{\prime},z_{T}) \Vert \leq \frac{2\alpha_T L}{k}$. \\
{\bf(2)} Then we consider bounding the cumulative gradient $\frac{1}{k}\sum_{i=T-k+1}^{T-1} \alpha_i \Vert \nabla F(w_{i-1},z_{i}) - \nabla F(w^{\prime}_{i-1},z_{i}) \Vert$. Since each step $i\in [T-k+1,\cdots, T-1]$ executes sampling with replacement, we can bound them in the way above. Then, we get $\frac{1}{k}\sum_{i=T-k+1}^{T-1}\alpha_i \Vert\nabla F(w^{\prime}_{i-1},z_i) - \nabla F(w_{i-1},z_i) \Vert \leq \frac{2L}{nk}\sum_{i=T-k+1}^{T-1}\alpha_i.$ \\
Second, by merging the above two results and taking summation over $T$ steps, we get $\mathbb{E}\left[\bar{\delta}_{T}\right] \leq (1 \!-\!\frac{1}{n})\bar{\delta}_{T-1}\! +\! \frac{1}{n}\left(\bar{\delta}_{T-1}+\frac{2\alpha_T L}{k}\right)\! +\! \frac{2L}{nk}\sum_{i=T-k+1}^{T-1}\alpha_i \! \leq \! \frac{2\alpha L^2}{n} \left( T - \frac{k}{2} \right)$,
where let $\alpha_i=\alpha$ and substitute it to Eq.~\eqref{eq_convex-basic} yields the desired result.
We leave the proof in Appendix \ref{proof-thm-con-with}.
\end{proof}

\begin{remark}
Theorem \ref{thm:stability-conv} shows that the \method{} algorithm has a sharper bound of $\frac{2\alpha L^2 s}{n} \left(T - \frac{k}{2} \right)$ under the convex assumption than the bound $\frac{2\alpha L^2 T}{n}$ for SGD given by \citet{hardt2016train}. The reason for improving the generalization comes from two main sources: (1) the last $k$ point averaging algorithm improves the SGD bound $\mathcal{O}(T/n)$ to $\mathcal{O}((T-k/2)/n)$, where $k$ is the number of averages, and this result degenerates to the SGD bound when $k = 1$. (2) the \method{} algorithm further improves the bound $\mathcal{O}((T-k/2)/n)$ to $s$ times its size, where $0\leq s \leq 1$ is the probability of $m = 1$.   
\end{remark}

\begin{remark}
The $k$ in Theorem \ref{thm:stability-conv} implies that the more checkpoints involved in the averaging in \method{}, the better the generalization performance. The sparse parameter $s$ selects models, aiming to achieve better generalization with fewer averaged models. However, these two aspects are not contradictory. For a lower sparsity rate, \method{} selects more models, leading to improved generalization performance, a trend validated in our experiments in Section \ref{sec:Exp}.
\end{remark}

\begin{lemma}\label{nonconvex-basic}
Let $\bar{w}_{T}^K$ and $\bar{w}_{T}^{K\prime}$ denote the corresponding outputs of \method{} after running $T$ steps on the datasets $S$ and $S^{\prime}$, which have $n$ samples but only one different. Assume that function $F(\cdot,z)$ satisfies Assumption \ref{$L$-Lipschitz} for a fixed example $z\in\mathcal{Z}$ and every $t_0 \in \{1,\cdots,n\}$, then we have
\begin{equation}\label{eq_nonconvex-basic}
\mathbb{E}\vert F(\bar{w}_{T}^K;z)-F(\bar{w}_{T}^{K\prime};z)\vert \leq \frac{t_0}{n} + sL \mathbb{E}\left[\bar{\delta}_{T}\vert \bar{\delta}_{t_0}=0\right],
\end{equation}   
where $s=\sup_{T-k+1\leq i\leq T} s_i$, where $s_i$ denotes the probability of $m_i=1$ and $\bar{\delta}_T=\frac{1}{k}\sum_{i=T-k+1}^{T}\Vert w_i-w_i^{\prime}\Vert$.
\end{lemma}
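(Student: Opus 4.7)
The plan is to mirror the non-convex stability argument of \citet{hardt2016train}, adapted to \method{} by combining it with the mask-expectation reasoning already used in the proof of Lemma \ref{convex-basic}. The reason the convex proof does not port directly is that, by Lemma \ref{lemma}(2), the gradient update is only $(1+\alpha\beta)$-expansive in the non-convex regime, so $\bar{\delta}_T$ can grow rapidly; conditioning on a ``good event'' where the two trajectories have not yet diverged is the standard device for controlling this blow-up, and the $t_0/n$ term in the statement already signals that exactly this conditioning has been done.

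I would pick $t_0 \in \{1,\ldots,n\}$ and introduce the event $E = \{\bar{\delta}_{t_0} = 0\}$, which is equivalent to $w_i = w_i'$ throughout the window $[t_0-k+1, t_0]$ because each summand of $\bar{\delta}_{t_0}$ is non-negative. Splitting the expectation into its restrictions to $E$ and $E^c$, on $E$ the $L$-Lipschitz assumption together with the mask-expectation step used in the proof of Lemma \ref{convex-basic} yields the contribution $sL\, \mathbb{E}[\bar{\delta}_T \mid \bar{\delta}_{t_0}=0]$, with $s=\sup_{T-k+1\leq i\leq T} s_i$ emerging exactly as before because the mask $m$ is independent of the sample-selection randomness defining $E$. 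On $E^c$, I would use the standard boundedness normalization $|F(\cdot;z)|\leq 1$ (WLOG by rescaling) so that $|F(\bar{w}_{T}^{K};z) - F(\bar{w}_{T}^{K\prime};z)| \leq 1$, reducing the $E^c$ contribution to $P(E^c)$.

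A union bound then gives $P(E^c) \leq t_0/n$: for the two trajectories to satisfy $w_i = w_i'$ on $[t_0-k+1,t_0]$ it suffices that the differing sample index is never selected in the first $t_0$ iterations, and each iteration selects it with probability $1/n$. Assembling the $E$ and $E^c$ pieces yields exactly the claimed inequality.

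The main obstacle I expect is the clean separation of the two sources of randomness inside the conditional expectation: the mask variables $m_i$ are auxiliary randomness independent of the sample selection, so taking the mask expectation commutes with conditioning on $E$, but this independence must be spelled out so that the factor $s$ does not degrade once we restrict to $E$. Once that is made explicit, the remainder is essentially a transcription of the Lemma \ref{convex-basic} argument under the conditional, plus the bounded-loss bookkeeping on $E^c$.
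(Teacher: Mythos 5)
Your proposal is correct and follows essentially the same route as the paper's proof: the paper also conditions on the event $\xi=\{\bar{\delta}_{t_0}=0\}$, bounds $P\{\xi^c\}$ by $t_0/n$ via the first time $I$ the differing sample is drawn (a union bound over the first $t_0$ steps), uses the boundedness $F\in[0,1]$ on the complement, and then applies the $L$-Lipschitz and mask-expectation steps of Lemma \ref{convex-basic} under the conditioning to produce the factor $sL$. Your explicit remark that the mask randomness is independent of the sample-selection randomness (so the factor $s$ survives the conditioning) is a point the paper leaves implicit, but it is not a different argument.
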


\begin{proof} By taking expectation for $m_i$, we split the proof of Lemma \ref{nonconvex-basic} into two parts.\\
In the first part, for $t_0 \in \{1,\cdots,n\}$, we discuss that different samples $z$ and $z^{\prime}$ can be selected only after step $t_0$. Then the inequality $\mathbb{E}\vert F(\bar{w}_{T}^K;z)-F(\bar{w}_{T}^{K\prime};z)\vert \leq \frac{t_0}{n} + L \mathbb{E}\left[\Vert \bar{w}_{T}^K - \bar{w}_{T}^{K\prime}\Vert\vert \Vert \bar{w}_{t_0}^K - \bar{w}_{t_0}^{K\prime}\Vert=0\right]$ will be obtained. One can find further proof details in Appendix \ref{proof-noncon-basic}.\\
Secondly, we take expectation for the $m_i$ of $\Vert \bar{w}_{T}^K \!-\! \bar{w}_{T}^{K\prime}\Vert$, which is similar to the proof of Lemma \ref{convex-basic}.
\end{proof}

\begin{theorem}\label{thm:stability-non-with}
Suppose we run \method{} with constant step sizes $\alpha \leq \frac{c}{T}$ for $T$ steps, where each step samples $z$ from $\mathcal{Z}$ uniformly with replacement. Let function $F\in[0,1]$ satisfies Assumptions \ref{$L$-Lipschitz} and \ref{beta-smooth}. \method{} has uniform stability of
\begin{equation}\label{result-5.3}
  \epsilon_{gen}\leq \frac{1+\frac{1}{c\beta}}{n-1}\left(2csL^2(1+ke^{c\beta})k^{-1}\right)^{\frac{k}{c\beta+k}}\cdot T^{\frac{c\beta}{c\beta+k}},
 \end{equation}
where $s=\sup_{T-k+1\leq i\leq T} s_i$.
\end{theorem}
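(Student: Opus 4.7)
The plan is to reduce the generalization error to a scalar optimization in a free cutoff $t_0 \in \{1,\ldots,n\}$ by invoking Lemma~\ref{nonconvex-basic}, which gives $\epsilon_{gen} \leq t_0/n + sL \cdot \mathbb{E}[\bar\delta_T \mid \delta_{t_0}=0]$. After obtaining a sharp upper bound on $\bar\Delta_T := \mathbb{E}[\bar\delta_T \mid \delta_{t_0}=0]$ that decreases in $t_0$, I would balance the two contributions by minimizing over $t_0$.

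The heart of the argument is the control of $\bar\Delta_T$. I would first write a one-step recursion for $\Delta_i := \mathbb{E}[\|w_i - w_i'\| \mid \delta_{t_0}=0]$, splitting on whether SGD at step $i$ selects the sample common to $S$ and $S'$ (probability $1-1/n$, in which case Lemma~\ref{lemma}(2) supplies the $(1+\alpha\beta)$-expansive bound) or the one differing sample (probability $1/n$, which contributes an extra $2\alpha L$ via the $L$-Lipschitz assumption). Combining yields $\Delta_{i+1} \leq (1+(1-1/n)\alpha\beta)\Delta_i + 2\alpha L/n$, and unrolling from $\Delta_{t_0}=0$ with $(1+x)^m \leq e^{xm}$ together with the step-size constraint $\alpha \leq c/T$ gives $\Delta_i \leq \frac{2L}{(n-1)\beta}\bigl((1+\alpha\beta)^{i-t_0}-1\bigr)$, whose exponential growth is bounded by $e^{c\beta}$. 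The critical next move is to average over the window $i\in[T-k+1,T]$: the geometric sum $\sum_{i=T-k+1}^T (1+\alpha\beta)^{i-t_0}$, after the step-size bound is applied, is exactly what produces the combinatorial factor $(1+ke^{c\beta})/k$ inside the constant $B := 2csL^2(1+ke^{c\beta})/k$ appearing in the claim.

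Plugging the bound on $\bar\Delta_T$ back into Lemma~\ref{nonconvex-basic} yields a univariate expression of the form $t_0/(n-1) + E \cdot g(T,t_0;c,\beta,k)$. Setting its derivative in $t_0$ to zero, solving for the optimizer $t_0^*$ of order $T^{c\beta/(c\beta+k)}$, and substituting back produces the stated bound $\frac{1+1/(c\beta)}{n-1}B^{k/(c\beta+k)}T^{c\beta/(c\beta+k)}$. The principal difficulty is obtaining the $k$ inside the exponent of $T$: the crude estimate $\bar\Delta_T \leq \Delta_T$ would only recover the SGD-style rate $c\beta/(c\beta+1)$, so the proof must exploit the particular structure of the $k$-term average of shifted exponentials, effectively distributing the per-step expansion across $k$ iterates so that the cumulative growth on the averaged iterate behaves like $(T/t_0)^{c\beta/k}$ before the $t_0$ optimization is carried out. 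The remaining work is routine bookkeeping: verifying $t_0^* \in \{1,\ldots,n\}$ so the optimizer is interior, checking that the constants collapse to the stated coefficients, and confirming that the bound correctly degenerates to the SGD result when $k=1$ and $s=1$.
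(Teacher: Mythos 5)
Your reduction via Lemma~\ref{nonconvex-basic} and the final optimization over $t_0$ match the paper, but the core of the argument --- how the window size $k$ enters the exponent of $T$ --- is missing, and the route you sketch cannot supply it. You propose to bound each individual deviation $\Delta_i=\mathbb{E}[\|w_i-w_i'\|\mid \delta_{t_0}=0]$ by the standard SGD recursion $\Delta_{i+1}\le(1+(1-1/n)\alpha\beta)\Delta_i+2\alpha L/n$ and then average the resulting shifted exponentials over $i\in[T-k+1,T]$. But the average of $k$ consecutive terms each of order $(1+\alpha\beta)^{i-t_0}$ (or $(i/t_0)^{c\beta}$ with a decaying step) is still of order $(T/t_0)^{c\beta}$ when $k\ll T$; averaging \emph{after} bounding the per-iterate growth cannot turn an exponent $c\beta$ into $c\beta/k$. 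You acknowledge this ("the crude estimate would only recover the SGD-style rate"), but the proposed fix --- "distributing the per-step expansion across $k$ iterates" --- is precisely the step that needs a concrete mechanism, and none is given. Your sketch is also internally inconsistent on the step size: with a truly constant $\alpha\le c/T$ the factor $(1+\alpha\beta)^{i-t_0}$ is uniformly bounded by $e^{c\beta}$, leaving essentially no $t_0$-dependence to optimize, whereas your claimed optimizer $t_0^*\sim T^{c\beta/(c\beta+k)}$ presupposes a bound of the form $(T/t_0)^{c\beta/k}$.

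The paper obtains the $k$ in the exponent by running the recursion directly on the window-averaged deviation $\bar\delta_T$, not on individual iterates. The telescoping identity $\hat w^k_T=\hat w^k_{T-1}+\frac1k(w_T-w_{T-k})=\hat w^k_{T-1}-\frac1k\sum_{i=T-k+1}^T\alpha_i\nabla F(w_{i-1},z_i)$ in Eq.~\eqref{pro-FWA-update} shows that the averaged iterate absorbs only a $1/k$ fraction of a gradient per step, so the expansive contribution at step $T$ is $\frac{\alpha\beta}{k}\|w_{T-1}-w'_{T-1}\|$ rather than $\alpha\beta\|w_{T-1}-w'_{T-1}\|$. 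Closing the recursion in $\bar\delta$ alone then requires the companion reverse inequality of Lemma~\ref{Lemma_noncon}, $\|w_T-w'_T\|\le e^{c\beta k/T}\bar\delta_T$, which converts the single-iterate deviation back into the window average. This yields a per-step growth factor $\exp\bigl((1-\tfrac1n)\tfrac{c\beta}{kt}\bigr)$ (taking $\alpha_t=c/t$), whose product over $t\in(t_0,T]$ is $(T/t_0)^{(1-1/n)c\beta/k}$ --- the source of the $c\beta/(c\beta+k)$ exponent after balancing against $t_0/n$. Without this change of variable to the averaged recursion and the accompanying reverse inequality, the proof does not go through.
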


\begin{proof}[Proof sketch]
In the non-convex setting, we finish the task using Lemma \ref{nonconvex-basic}. First, dividing cumulative gradient in the first stage is the same as in the convex case, except that we use the $(1+\alpha\beta)$-expansive properties (Lemma \ref{lemma}) to bound each $\Vert \nabla F(w_{i-1},z_{i}) - \nabla F(w^{\prime}_{i-1},z_{i}) \Vert$ here. \\
{\bf(1)} We bound each $\Vert \nabla F(w_{i-1},z_{i}) - \nabla F(w^{\prime}_{i-1},z_{i}) \Vert$ in the expectation with probabilities of $1/n$ and $1-1/n$, respectively. And combining $L$-Lipschitz conditions and $(1+\alpha\beta)$-expansive, we have $(2L(1+\alpha\beta)^{i-2})/n$. Then, bounding the cumulative gradient based on the above, we transform it into a problem of summing a finite geometric series. \\
{\bf(2)} We provide a key Lemma \ref{Lemma_noncon}, which helps us to obtain a recurrence relation for $\bar{\delta}_T$ and $\bar{\delta}_{T-1}$ in non-convex. We leave the details in Appendix \ref{proof-Lemma_noncon}. \\
Second, taking summation form $t_0$ to $T$, we get 
\begin{equation*}
 \mathbb{E}\left[\bar{\delta}_{T}\right] \leq \frac{2L(1+ke^{c\beta})}{(n-1)\beta} \cdot \left(\frac{T}{t_0}\right)^{\frac{c\beta}{k}}.   
\end{equation*} \\
Finally, we get $t_0$ by minimizing the Eq.~\eqref{with-con} and plug it into Eq.~\eqref{eq_nonconvex-basic}. We finish the proof and leave the details of this proof in Appendix \ref{proof-thm-non-with}.
\end{proof}

\begin{remark}
Under the non-convex assumption, Theorem \ref{thm:stability-non-with} shows that \method{} has bound $\mathcal{O}(T^{c\beta/(c\beta+k)}/n)$ compared to the bound $\mathcal{O}(T^{c\beta/(c\beta+1)}/n)$ for SGD in \citet{hardt2016train}, again showing its ability to improve generalization significantly. Although the number $k$, closely related to the number of iterations $T$, seems to dominate the result, the direct influence of parameter $s$ on the entire stationary bound also plays a crucial role. 
\end{remark}

\begin{remark}
The assumption that $F(w;z) \in [0,1]$ in Theorem \ref{thm:stability-non-with} is adopted for simplicity. Removing this condition does not affect the final results, as it merely introduces a constant scaling factor. The same setting is commonly used and discussed in \citet{hardt2016train,xiao2022stability}. 
\end{remark}

\section{Practical \method{} Implementation}
Although the \method{} algorithm has simpler expressions, the difficulty is learning the mask $m_i$. Inspired by tasks such as coreset selection \cite{coreset}, the discrete problem is relaxed to a continuous one. We first formulate weight selection into the following discrete optimization paradigm:
\begin{equation}\label{dis-opt}
    \min_{m\in C}F(m)=L\left(\textbf{w}(m)\right)=\frac{1}{n}\sum_{i=1}^{n}l\left(f(x_i;\textbf{w}(m),y_i) \right),
\end{equation}
where $C=\left\{\textbf{m}: m_i = 0 \,\textbf{or}\, 1, \Vert \textbf{m}\Vert_0\leq K \right\}$ and $\textbf{w}(m)=\frac{1}{K} \sum_{i=T-k+1}^{T} m_{i}w_{i}$.

To transform the discrete Eq.~\eqref{dis-opt} into a continuous one, we treat each mask $m_i$ as an independent binary random variable and reparameterize it as a Bernoulli random variable, $m_i \sim \text{Bern}(s_i)$, where $s_i \in [0, 1]$ represents the probability of $m_i$ taking the value 1, while $1-s_i$ corresponds to the probability of $m_i$ being 0. Consequently, the joint probability distribution of $m$ is expressed as $p(m\vert s) = \prod _{i=1}^{n}(s_i)^{m_i}(1 - s_i)^{1 - m_i}$. Then, the feasible domain of the target Eq.~\eqref{dis-opt} approximately becomes $\hat{C}=\left\{s: 0\leq s\leq 1, \Vert s\Vert_1\leq K \right\}$ since $\mathbb{E}_{m_i \sim p(m\vert s)}\Vert m\Vert_0 = \sum_{i=1}^{n}si$. As in the previous definition, $K>0$ in $\hat{C}$ is a constant that controls the size of the feasible domain. Then, Eq.~\eqref{dis-opt} can be naturally relaxed into the following excepted loss minimization problem:
\begin{equation}\label{E-dis-opt}
    \min_{s\in \hat{C}}F(s)=\mathbf{E}_{p(m|s)}L\left(\textbf{w}(m)\right),
\end{equation}
where $\hat{C}=\left\{s: 0\leq s\leq 1, \Vert s\Vert_1\leq K \right\}$.

Optimizing Eq.~\eqref{E-dis-opt} involves discrete random variables, which are non-differentiable.
One choice is using Policy Gradient Estimators (PGE) such as 
the REINFORCE algorithm \citep{williams1992simple,sutton1999policy} to bypass the back-propagation of discrete masks $m$,
\begin{equation*}
    \nabla_s F(s)=\mathbf{E}_{p(m|s)} L\left(\textbf{w}(m)\right) \nabla_s \log p( m \mid s).
\end{equation*}
However, these algorithms suffer from the high variance of computing the expectation of the objective function, hence may lead to slow convergence or sub-optimal results.

To address these issues, we resort to the reparameterization trick using Gumbel-softmax sampling \citep{jang2017categorical,maddison2017the}.
Instead of sampling discrete masks $m$, we get continuous relaxations by,
\begin{equation}\label{eq:gs-sample}
\small \!\!\!\!\!  \tilde{m}_i = \frac{\exp((\log s_i + g_{i, 1}) / t)}{\exp((\log s_i + g_{i, 1}) / t) + \exp((\log (1 - s_i) + g_{i, 0}) / t)},
\end{equation}
for $i = 1, \dots, k$, where $g_{i, 0}$ and $g_{i, 1}$ are i.i.d. samples from the $\text{Gumbel}(0, 1)$ distribution. The hyper-parameter $t > 0$ controls the sharpness of this approximation. When it reaches zero, i.e., $t \to 0$, $\tilde{m}$ converges to the true binary mask $m$. During training, we maintain $t > 0$ to make sure the function is continuous. For inference, we can sample from the Bernoulli distribution with probability $s$ to get sparse binary masks. In practice, the random variables $g \sim \text{Gumbel}(0, 1)$ can be efficiently sampled from,
\begin{equation*}
    g = - \log ( - \log (u)), \quad u \sim \text{Uniform}(0, 1).
\end{equation*}
For simplicity, we denote the Gumbel-softmax sampling in Eq.~\eqref{eq:gs-sample} as $\tilde{m} = \text{GS}(s, u, t)$, where $u \sim \text{Uniform}(0, 1)$. Replacing the binary mask $m$ in Eq.~\eqref{E-dis-opt} with the continuous relaxation $\tilde{m}$, the optimization problem becomes,
\begin{equation*}
    \min_{s\in \hat{C}}F(s)=\mathbf{E}_{u \sim \text{Uniform(0, 1)}} L\left(\textbf{w}(\text{GS}(s, u, t)\right),
\end{equation*}
where $\hat{C}=\left\{s: 0\leq s\leq 1, \Vert s\Vert_1\leq K \right\}$. The expectation can be approximated by Monte Carlo samples, i.e.,
\begin{equation*}
    \min_{s\in \hat{C}} \hat{F}(s)= \frac{1}{M} \sum_{m=1}^M L\left(\textbf{w}(\text{GS}(s, u^{(m)}, t)\right),
\end{equation*}
where $u^{(m)}$ are i.i.d. samples drawn from $\text{Uniform}(0, 1)$.
Empirically, since the distribution of $u$ is fixed, this Monte Carlo approximation exhibits low variance and stable training \cite{kingma2013auto,rezende2014stochastic}. Furthermore, since Eq.~\eqref{eq:gs-sample} is continuous, we can optimize it using back-propagation and gradient methods.

\begin{algorithm2e}[t]
    \caption{Selected Weight Average (\method{})}\label{alg:gs}
    \KwIn{Checkpoints $\textbf{w}$, hyper-parameter $t$}
    \KwInit{Mask probability $s$\;}
    \For{$i = 1, \dots, max\_iteration$}{
    \tcc{Gumbel-softmax sampling}
    \For{$m = 1, \dots, M$}{
        Sample $u^{(m)} \sim \text{Uniform}(0, 1)$\;
        Compute $L\left(\textbf{w}(\text{GS}(s, u^{(m)}, t)\right)$\;
    }
    \tcc{Learning mask probability}
    Optimize $\hat{F}(s)= \frac{1}{M} \sum_{m=1}^M L\left(\textbf{w}(\text{GS}(s, u^{(m)}, t)\right)$\;
    }
    \KwOut{Mask $m$ based on $K$ largest probabilities in $s$}
\end{algorithm2e}

\begin{remark}
\method{} adaptively selects useful checkpoints, which implies that it does not require the extra cost associated with manual design and avoids model biases introduced by prior knowledge, thereby making our approach applicable to a broader range of tasks. In the following experiments, \method{} algorithm demonstrates particular suitability for scenarios characterized by unstable training trajectories, such as behavior cloning. By leveraging checkpoint averaging, \method{} effectively stabilizes the training process, mitigating fluctuations and enhancing overall performance.
\end{remark}

\section{Experiment}
\label{sec:Exp}
In our experimental evaluation, we systematically explore the performance of our proposed method across three distinct settings: behavior cloning, image classification, and text classification. These settings are chosen to demonstrate the generality and effectiveness of our approach in diverse application domains. 
Details of the experimental setup, including network architectures, hyperparameters, and additional results, are provided in Appendix \ref{sec:ExpDetail}.

\subsection{Behavior Cloning}
\textbf{Experimental Setups.} We performed extensive evaluations using the widely recognized D4RL benchmark \citep{fu2020d4rl}, with a particular focus on the Gym-MuJoCo locomotion tasks. These tasks are commonly regarded as standard benchmarks due to their simplicity and well-structured nature. They are characterized by datasets containing a significant proportion of near-optimal trajectories and smooth reward functions, making them suitable for evaluating the performance of reinforcement learning methods. 
%
For evaluation, we employed cumulative reward as the primary metric, as it effectively captures the overall performance of the agent in maximizing returns over the course of its trajectories. 

\textbf{Baselines.} 
To assess the effectiveness of our proposed SeWA method, we compare it against several established baselines, including the original pre-training recipe based on stochastic gradient descent (SGD), Stochastic Weight Averaging (SWA) \citep{izmailov2018averaging}, and Exponential Moving Average (EMA) \citep{szegedy2016rethinking}, which we adapt for the behavior cloning setting.
For EMA, we follow the approach outlined in \citet{kaddour2022stop}, setting the decay parameter to 0.9 and updating the EMA model at every $K$ training step, which is a widely adopted standard practice. 
For SWA, we adhere to the original pre-training procedure up to 75\% completion. Following this phase, we initiate SWA training with a cosine annealing scheduler and compute the SWA uniform average every $K$ steps to aggregate model parameters effectively.
Additionally, we compare our SeWA with LAWA \citep{sanyal2023early} and a Random baseline. 
Both of these baselines involve directly averaging pretrained checkpoints from the original pre-training process without additional retraining. Specifically, LAWA selects $K$ checkpoints at equal intervals, whereas the Random baseline selects $K$  checkpoints randomly from the same set.
It is important to note that LAWA, Random, and our proposed method all utilize the final 1000 checkpoints from the pre-training process to compute performance metrics without further retraining the model.
In contrast, the SGD, SWA, and EMA baselines report their final performance directly, as their evaluation pipelines and corresponding techniques are integrated into their respective training processes. This ensures a fair and consistent comparison across all methods.

\begin{figure}
    \centering
    \includegraphics[width=0.9\linewidth]{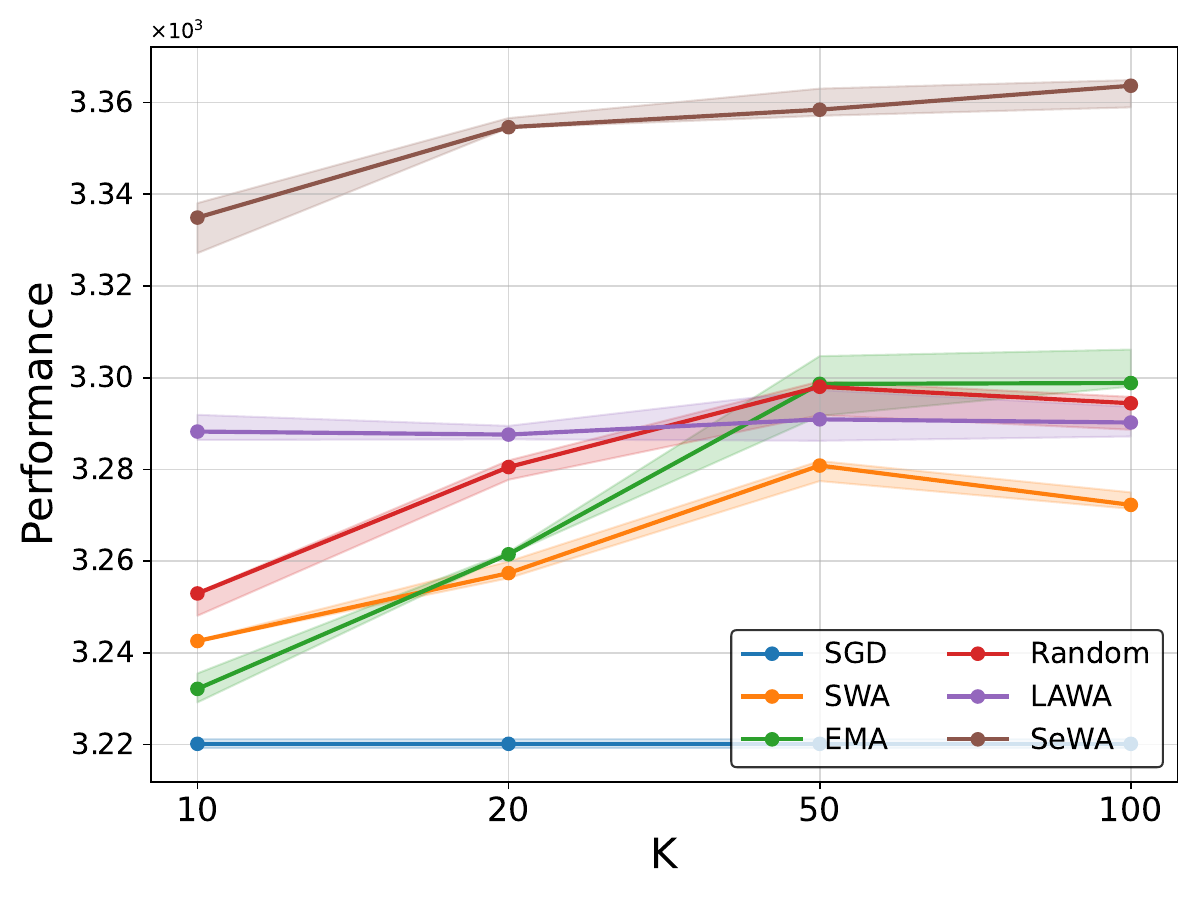}
    \vspace{-.4cm}
    \caption{Comparison of different methods on the D4RL benchmark. Each data point represents the average cumulative reward across multiple tasks, averaged over 3 random seeds and 20 trajectories per seed. Detailed results are provided in Appendix \ref{sec:ExpDetail}.}
    \label{fig:BC}
\end{figure}

\begin{figure*}[t!]
    \centering
    \subfigure{
    \centering
    \includegraphics[width=0.3\textwidth]{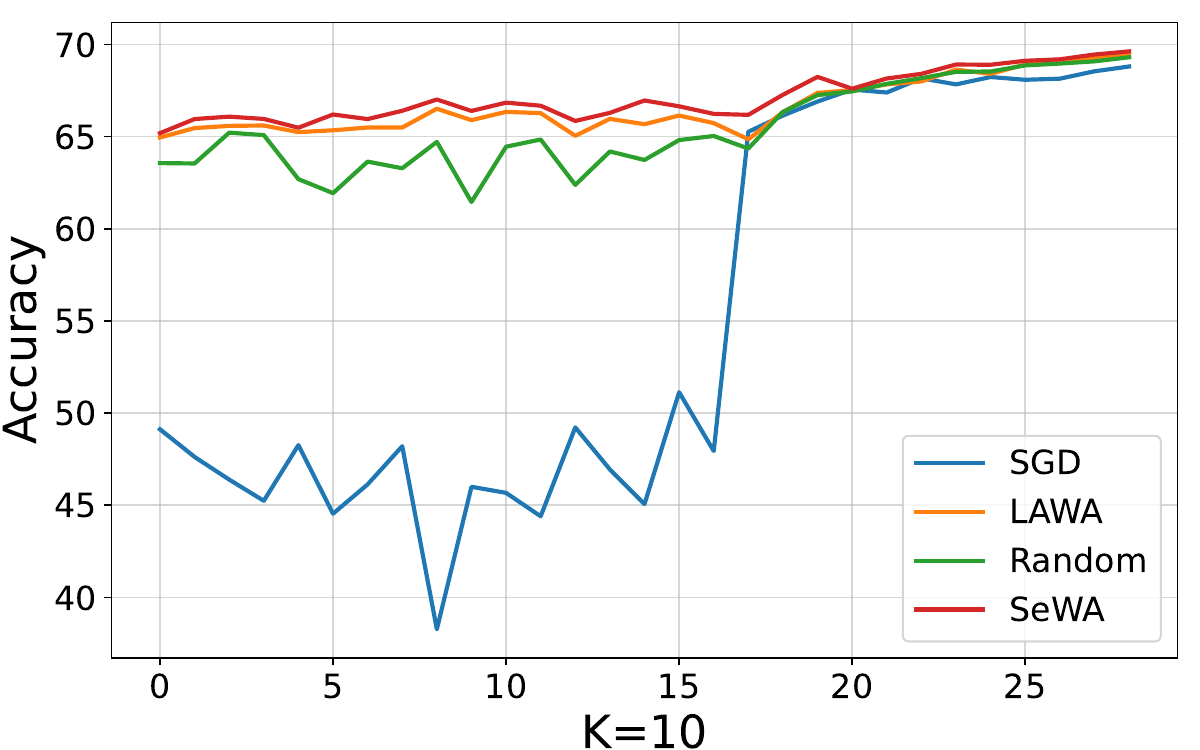}}
    \centering
    \subfigure{
    \centering
    \includegraphics[width=0.3\textwidth]{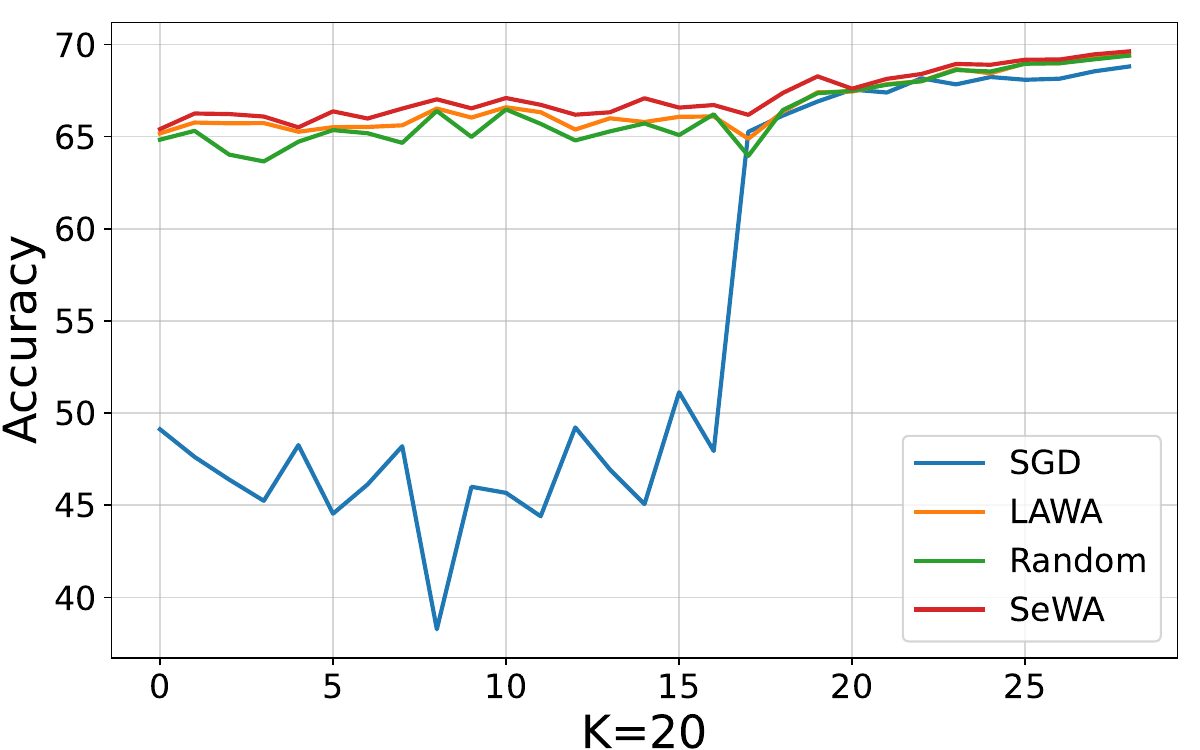}}
    \subfigure{
    \centering
    \includegraphics[width=0.3\textwidth]{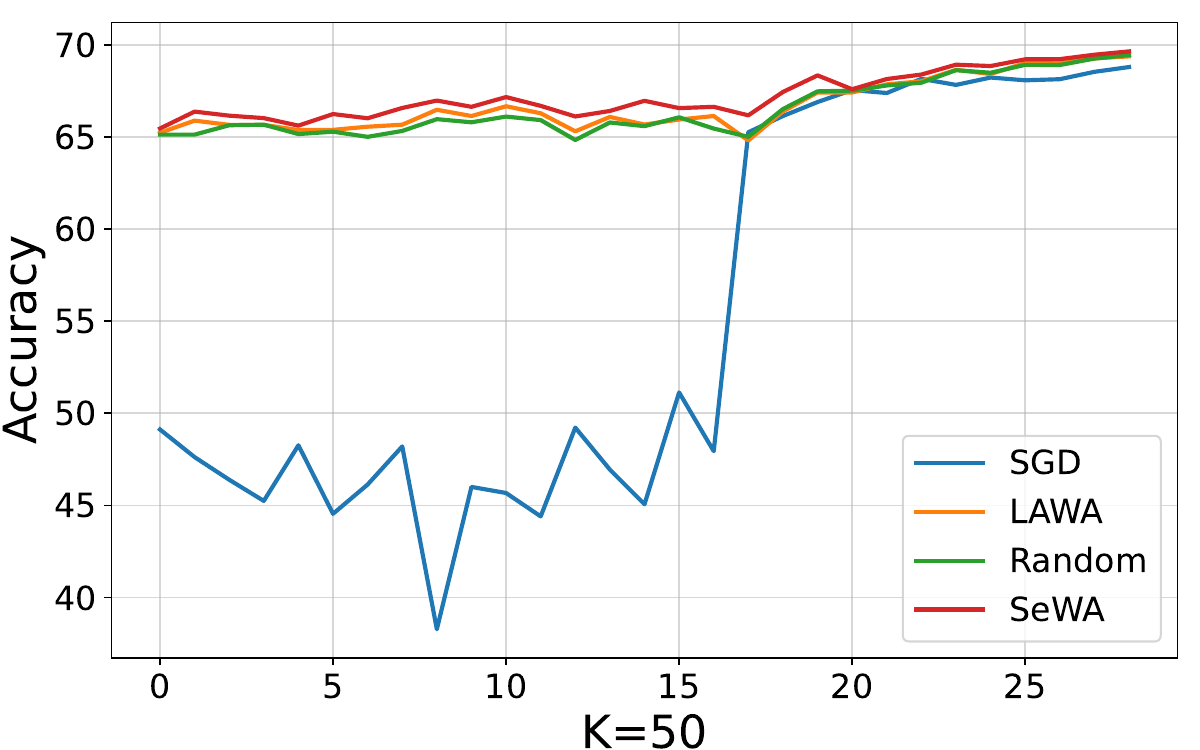}}
    \vspace{-0.2cm}
    \caption{From left to right, the figures illustrate the impact of the hyperparameter $K$ on the CIFAR-100 task. Each point corresponds to intervals of 100 checkpoints, with $K$ checkpoints selected and averaged from these intervals using different strategies.
    }
    \label{fig:cifar100}
\end{figure*}

\begin{figure*}[t!]
    \centering
    \subfigure{
    \centering
    \includegraphics[width=0.3\textwidth]{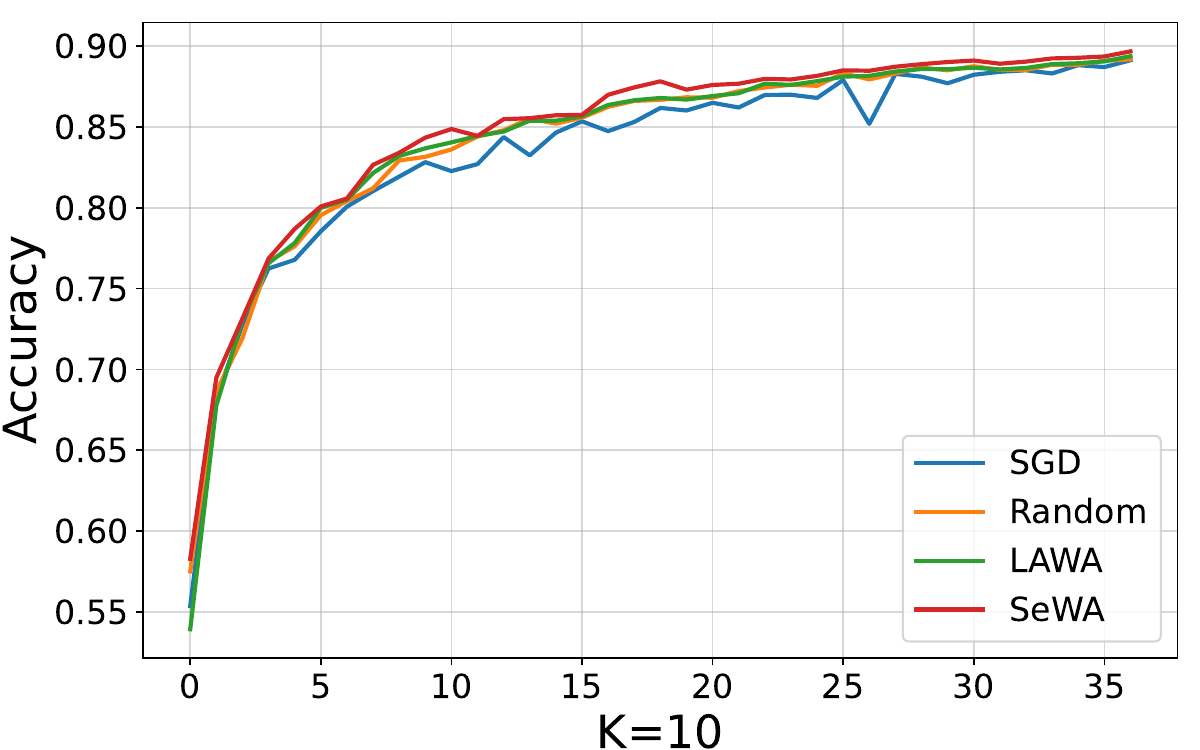}}
    \centering
    \subfigure{
    \centering
    \includegraphics[width=0.3\textwidth]{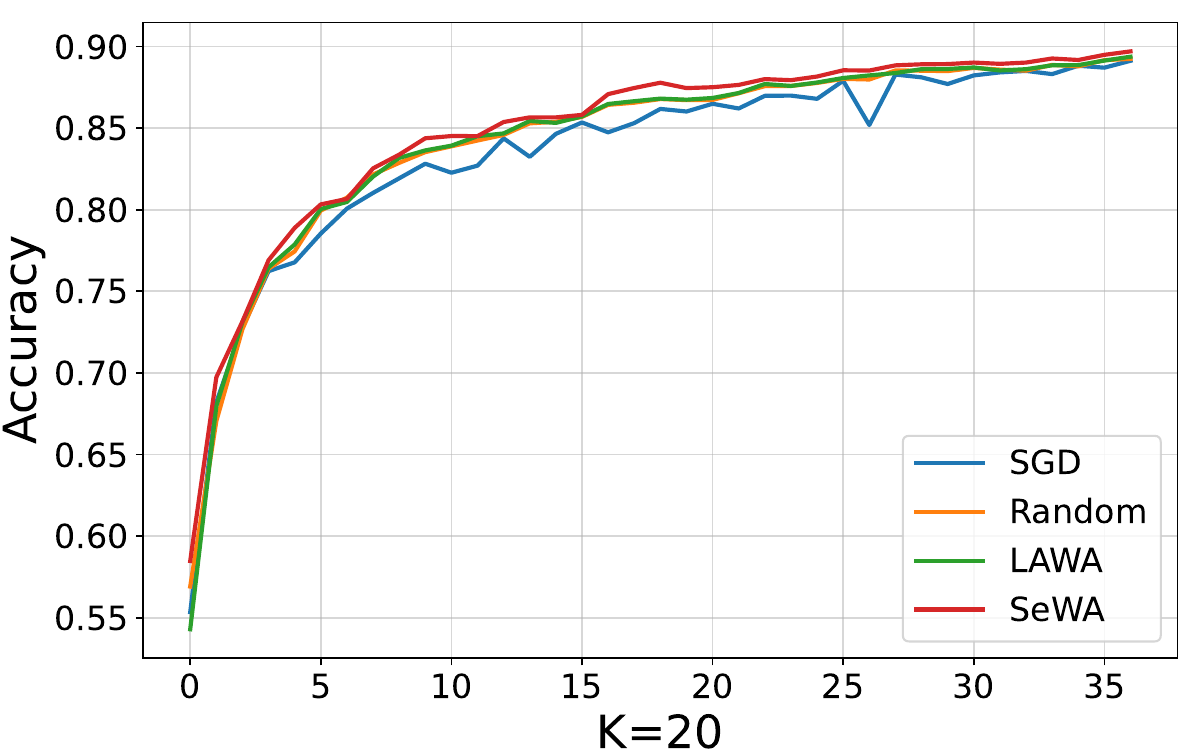}}
    \subfigure{
    \centering
    \includegraphics[width=0.3\textwidth]{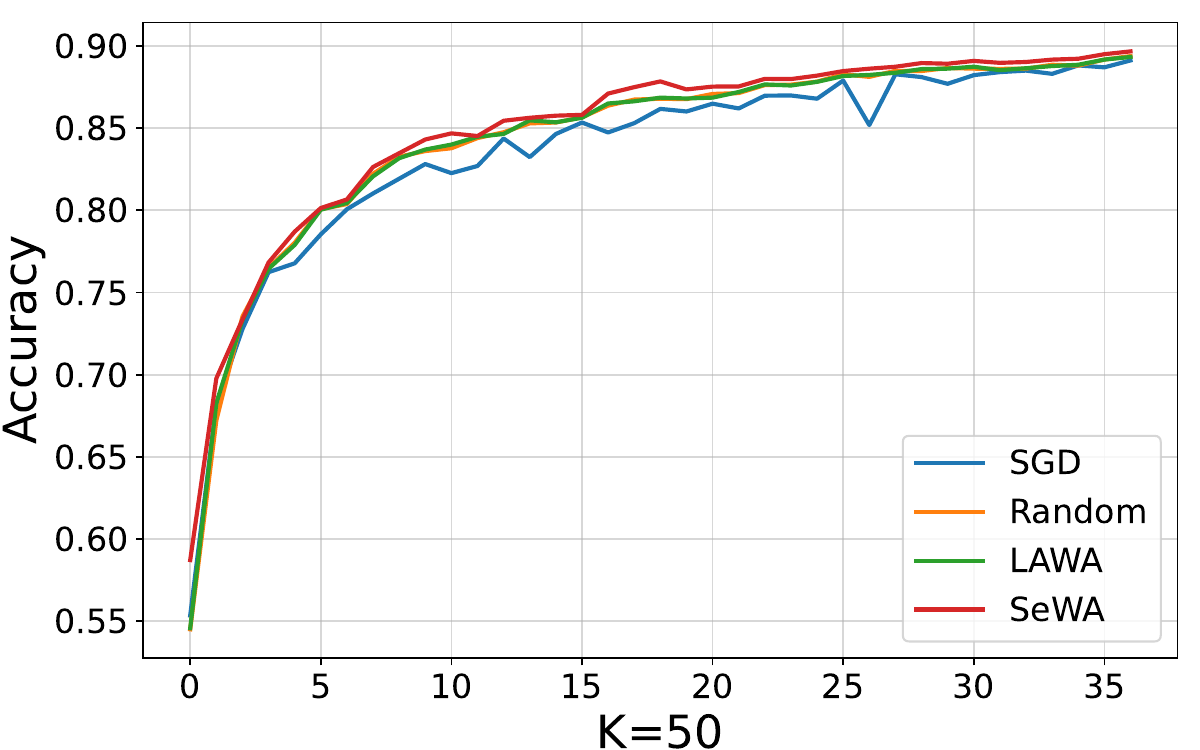}}
    \vspace{-0.2cm}
    \caption{From left to right, the figures illustrate the impact of the hyperparameter $K$ on the AG News corpus. Each point corresponds to intervals of 100 checkpoints, with $K$ checkpoints selected and averaged from these intervals using different strategies.
    }
    \label{fig:ag}
\end{figure*}

\textbf{Results.} 
As shown in Figure \ref{fig:BC}, all baselines demonstrate superior performance compared to the original SGD optimizer, highlighting the effectiveness of weight averaging strategies in improving model performance. These results confirm that weight averaging can serve as a valuable technique for stabilizing and enhancing model training outcomes. 
Additionally, our analysis reveals that increasing the number of checkpoints $K$ used for averaging consistently improves performance across all methods. However, this improvement tends to plateau beyond a certain threshold, indicating diminishing returns as the number of averaged checkpoints increases.
Most significantly, our proposed method consistently outperforms all other baselines across all experimental settings. Remarkably, even with only $K=10$ checkpoints used for averaging, our method achieves superior results compared to competing approaches that utilize $K=100$ checkpoints. This highlights our approach's efficiency and robustness, as it can deliver significant improvements with a substantially smaller computational footprint. These results demonstrate the scalability and practicality of our method in scenarios where resource efficiency is critical.

\subsection{Image Classification}
\textbf{Experimental Setups.} 
To evaluate our method's performance in image classification, we utilize the CIFAR-100 dataset and the ResNet architecture. With its diverse set of 100 classes, the CIFAR-100 dataset provides a challenging benchmark for image classification tasks. We use classification accuracy on the test dataset as the primary performance metric.
In our experiments, we utilize intermediate model checkpoints saved during the final stage of training, specifically after 10,000 training steps. 
Performance is evaluated at intervals of 100 checkpoints, with the number of checkpoints included in the averaging procedure within each interval controlled by the hyperparameter $K$.
This flexibility allows us to adjust the extent of checkpoint aggregation and analyze its impact comprehensively.
By evaluating our method under varying levels of checkpoint averaging, derived from different intervals of the training process, this setup facilitates a robust assessment of its effectiveness across diverse configurations. 


\textbf{Results.}
As illustrated in Figure \ref{fig:cifar100}, all baselines outperform the original SGD optimizer, underscoring the effectiveness of weight averaging in enhancing model performance. Additionally, weight averaging accelerates model convergence, with all baselines reaching performance levels that SGD requires $17$ steps to achieve.
Our SeWA method consistently delivers the best performance, demonstrating its effectiveness. Beyond $17$ steps, where the model approaches convergence, further improvement becomes minimal, as the checkpoints at this stage share highly similar weights.

\subsection{Text Classification}

\textbf{Experimental Setups.}
For the text classification task, we utilize the AG News corpus, a widely used benchmark dataset containing news articles categorized into four distinct classes. The classification is performed using a transformer-based architecture, which is known for its effectiveness in handling natural language processing tasks.
To preprocess the dataset, we tokenize the entire corpus using the \textit{basic\_english} tokenizer. Any words not found in the vocabulary are replaced with a special token, \textit{UNK}, to handle out-of-vocabulary terms. This preprocessing ensures that the dataset is standardized and ready for training.
We save intermediate model checkpoints throughout the training process, starting from the initial stages. From this set of checkpoints, we systematically select every 100th checkpoint for inclusion in the averaging process. The hyperparameter $K$ controls the total number of checkpoints used for averaging, allowing flexible experimentation with different levels of checkpoint aggregation. This design enables us to thoroughly evaluate the impact of checkpoint averaging on the model’s performance.


\textbf{Results.}
As shown in Figure \ref{fig:ag}, the improvement of weight averaging over the SGD baseline is minimal for relatively simple tasks, primarily serving to stabilize training. However, our SeWA method consistently achieves the best results regardless of task complexity, demonstrating its broad applicability across diverse settings.
\section{Conclusion}
\label{sec:Con}
We propose a new algorithm \method{} for adaptive selecting checkpoints to average, which balances generalization performance and convergence speed. Under different function assumptions, we derive its generalization bounds, exhibiting superior results compared to other algorithms. In practical implementation, we employ probabilistic reparameterization to transform the discrete optimization problem into a continuous objective solvable by gradient-based methods. Empirically, we verify that our approach can help to obtain good performance for unstable training processes, and a few checkpoints selected by \method{} can achieve results due to other algorithms using several times as many points. 

 \section*{Impact Statement} The \method{} enhances both the model's generalization ability and convergence speed. Our approach can be further combined with the training process to improve training stability.

\bibliography{example_paper}
\bibliographystyle{icml2025}

\newpage
\appendix
\onecolumn

\section{Experiment Details}
\label{sec:ExpDetail}

\subsection{Behavior Cloning}

\textbf{Network Architecture.} The network architecture comprises four layers, each consisting of a sequence of ReLU activation, Dropout for regularization, and a Linear transformation. The final layer includes an additional Tanh activation function to enhance the representation and capture non-linearities in the output.

\textbf{Results.} 
Comprehensive results for each task across all datasets are presented in Table \ref{tab:BC}. Our evaluation focuses specifically on the medium and medium-expert datasets, which offer a balanced mix of trajectories with varying performance levels. This selection enables a thorough assessment of our method's ability to generalize across different reward distributions.
For clarity and ease of comparison, the main paper emphasizes the average performance across tasks, as illustrated in Figure \ref{fig:BC}. 

\begin{table*}[t!]
\centering
\caption{
Performance comparison of various methods on D4RL Gym tasks. The left panel shows results obtained using the final checkpoint under different update strategies, while the right panel presents results from averaged checkpoints collected during the final training stage with SGD, using different selection strategies. 
Each result is evaluated as the mean of 60 random rollouts, based on 3 independently trained models with 20 trajectories per model.
}
\label{tab:BC}
\scalebox{0.9}{
\begin{tabular}{c|cc|ccc|ccc}
\toprule[2pt]
\multicolumn{1}{l|}{} & Task & Dataset & SGD & SWA & EMA & LAWA & Random & \method{} (Ours) \\ \midrule
\multirow{7}{*}{K=10} & Hopper & medium & 1245.039 & 1279.249 & 1297.270 & 1289.515 & 1291.478 & \textbf{1324.848} \\
 & Hopper & medium-expert & 1460.785 & 1468.893 & 1320.408 & 1462.452 & 1451.015 & \textbf{1509.317} \\
 & Walker2d & medium & 3290.248 & 3328.121 & 3341.888 & 3341.437 & 3306.763 & \textbf{3371.202} \\
 & Walker2d & medium-expert & 3458.693 & 3546.008 & 3681.504 & 3634.373 & 3609.611 & \textbf{3679.806} \\
 & Halfcheetah & medium & 4850.490 & 4858.224 & 4894.204 & 5012.389 & 4896.104 & \textbf{5041.369} \\
 & Halfcheetah & medium-expert & 5015.689  & 4974.923 & 4857.562 & 4989.329 & 4962.719 & \textbf{5082.902} \\ \cmidrule{2-9}
 & \multicolumn{2}{c|}{Average} & 3220.157 & 3242.570 & 3232.139 & 3288.249 & 3252.948 & \textbf{3334.907} \\ \midrule
\multirow{7}{*}{K=20} & Hopper & medium & 1245.039 & 1281.910 & 1302.400 & 1310.875 & 1312.166 & \textbf{1361.202} \\
 & Hopper & medium-expert & 1460.785 & 1427.47 & 1373.268 & 1563.307 & 1482.012 & \textbf{1571.127} \\
 & Walker2d & medium & 3290.248 & 3308.464 & \textbf{3420.257} & 3325.873 & 3324.557 & 3364.886 \\
 & Walker2d & medium-expert & 3458.693 & 3588.176 & 3667.809 & 3557.925 & 3650.846 & \textbf{3673.804} \\
 & Halfcheetah & medium & 4850.490 & 4913.549 & 4848.006 & 4974.041 & 4924.613 & \textbf{5071.051} \\
 & Halfcheetah & medium-expert & 5015.689 & 5024.723 & 4957.194 & 4993.524 & 4988.816 & \textbf{5085.628} \\ \cmidrule{2-9}
 & \multicolumn{2}{c|}{Average} & 3220.157 & 3257.382 & 3261.489 & 3287.591 & 3280.502 & \textbf{3354.616} \\ \midrule
\multirow{7}{*}{K=50} & Hopper & medium & 1245.039 & 1294.884 & 1329.863 & 1336.33 & 1319.571 & \textbf{1389.280} \\
 & Hopper & medium-expert & 1460.785 & 1477.466 & 1485.696 & 1537.672 & 1496.045 & \textbf{1616.116} \\
 & Walker2d & medium & 3290.248 & 3262.046 & 3341.767 & 3253.695 & 3352.12 & \textbf{3392.130} \\
 & Walker2d & medium-expert & 3458.693 & 3577.509 & 3591.081 & 3584.468 & 3659.789 & \textbf{3672.560} \\
 & Halfcheetah & medium & 4850.490 & 4927.951 & 4968.048 & 5022.097 & 5000.004 & \textbf{5035.631} \\
 & Halfcheetah & medium-expert & 5015.689  & 5061.688 & \textbf{5075.426} & 5011.232 & 4960.585 & 5044.886 \\ \cmidrule{2-9}
 & \multicolumn{2}{c|}{Average} & 3220.157 & 3280.833 & 3298.647 & 3290.916 & 3298.019 & \textbf{3358.434} \\ \midrule
\multirow{7}{*}{K=100} & Hopper & medium & 1245.039 & 1347.267 & 1322.625 & 1320.652 & 1319.727 & \textbf{1393.981} \\
 & Hopper & medium-expert & 1460.785 & 1527.206 & 1528.265 & 1496.266 & 1491.196 & \textbf{1568.025} \\
 & Walker2d & medium & 3290.248 & 3324.218 & 3393.646 & 3345.913 & 3321.046 & \textbf{3424.078} \\
 & Walker2d & medium-expert & 3458.693 & 3575.621 & 3629.308 & 3613.274 & 3587.211 & \textbf{3710.347} \\
 & Halfcheetah & medium & 4850.490 & 4939.629 & 4871.376 & 4974.220 & 5015.349 & \textbf{5021.948} \\
 & Halfcheetah & medium-expert & 5015.689 & 4919.624 & 5047.757 & 4991.007 & 5031.975 & \textbf{5063.546} \\ \cmidrule{2-9}
 & \multicolumn{2}{c|}{Average} & 3220.157 & 3272.261 & 3298.830 & 3290.222 & 3294.417 & \textbf{3363.654} \\ 
 \bottomrule
\end{tabular}
}
\end{table*}

\subsection{Image Classification}

\textbf{Network Architecture.} The network architecture consists of three primary blocks, followed by an average pooling layer and a linear layer for generating the final output. Each block contains two convolutional layers, each accompanied by a corresponding batch normalization layer to improve training stability and convergence. To address potential issues of vanishing gradients, each block includes a shortcut connection that facilitates efficient gradient flow during backpropagation. The output of each block is passed through a ReLU activation function to introduce non-linearity, enabling the network to learn complex representations effectively.

\textbf{Results.} In addition to the results presented in Figure \ref{fig:cifar100}, we provide further analysis examining the impact of network parameter variations to demonstrate the robustness of our method across networks of different sizes. These results, shown in Figure \ref{fig:cifar100-layers}, illustrate that as the number of layers or blocks increases, the performance of SGD improves, following a similar training curve.

Notably, weight averaging consistently outperforms SGD during the upward phase of training. The performance gains from weight averaging become more pronounced as the network size increases, highlighting its potential in scaling effectively to larger models. This highlights the potential of weight averaging to enhance the performance of larger models. 
Furthermore, regardless of changes in network parameters, our proposed method consistently achieves superior results, demonstrating its adaptability and effectiveness across varying network configurations. These findings emphasize the potential of weight averaging as a robust and scalable technique for optimizing model performance.

\begin{figure*}[t!]
    \centering
    \subfigure{
    \centering
    \includegraphics[width=0.3\textwidth]{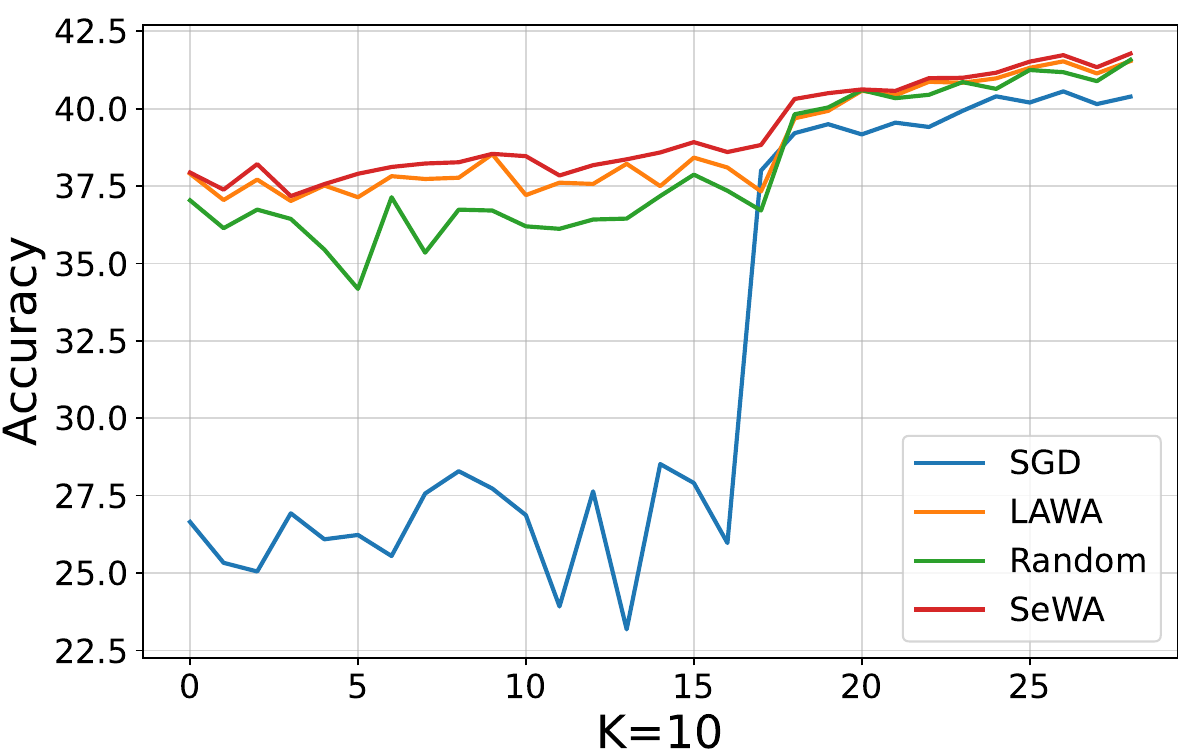}}
    \centering
    \subfigure{
    \centering
    \includegraphics[width=0.3\textwidth]{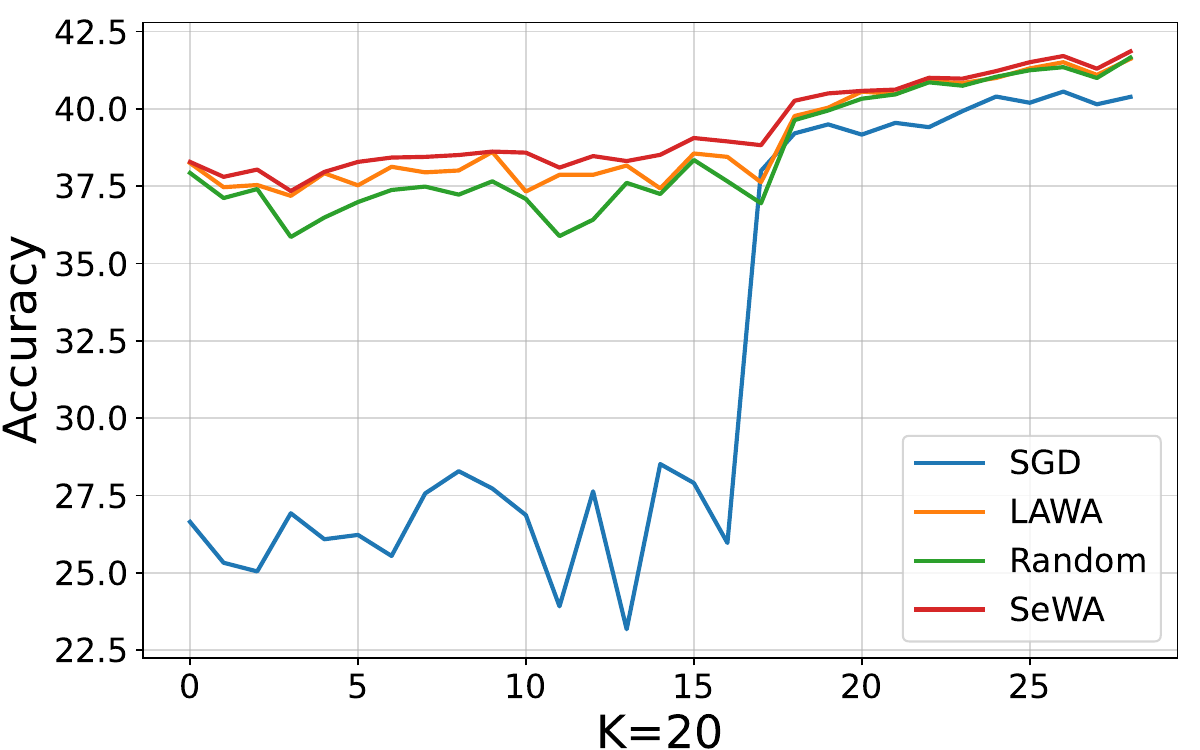}}
    \subfigure{
    \centering
    \includegraphics[width=0.3\textwidth]{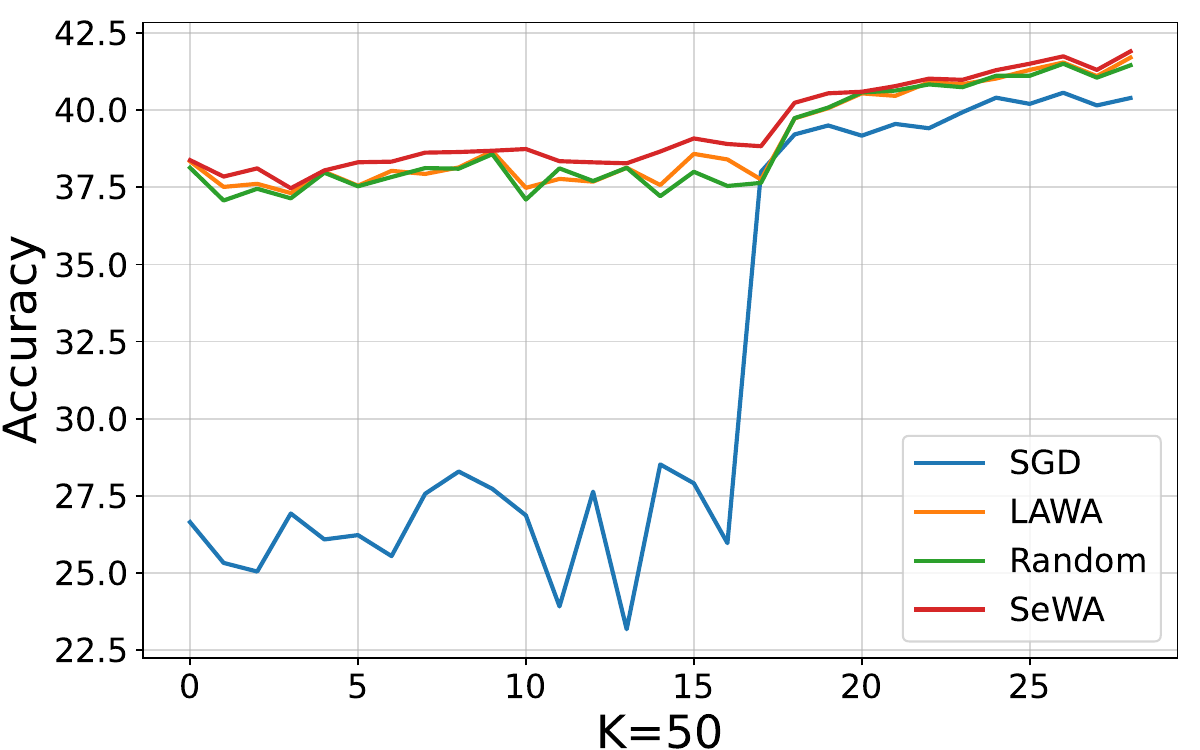}}
    \subfigure{
    \centering
    \includegraphics[width=0.3\textwidth]{figs/Cifar100-ResNet3-k10.pdf}}
    \centering
    \subfigure{
    \centering
    \includegraphics[width=0.3\textwidth]{figs/Cifar100-ResNet3-k20.pdf}}
    \subfigure{
    \centering
    \includegraphics[width=0.3\textwidth]{figs/Cifar100-ResNet3-k50.pdf}}
    \subfigure{
    \centering
    \includegraphics[width=0.3\textwidth]{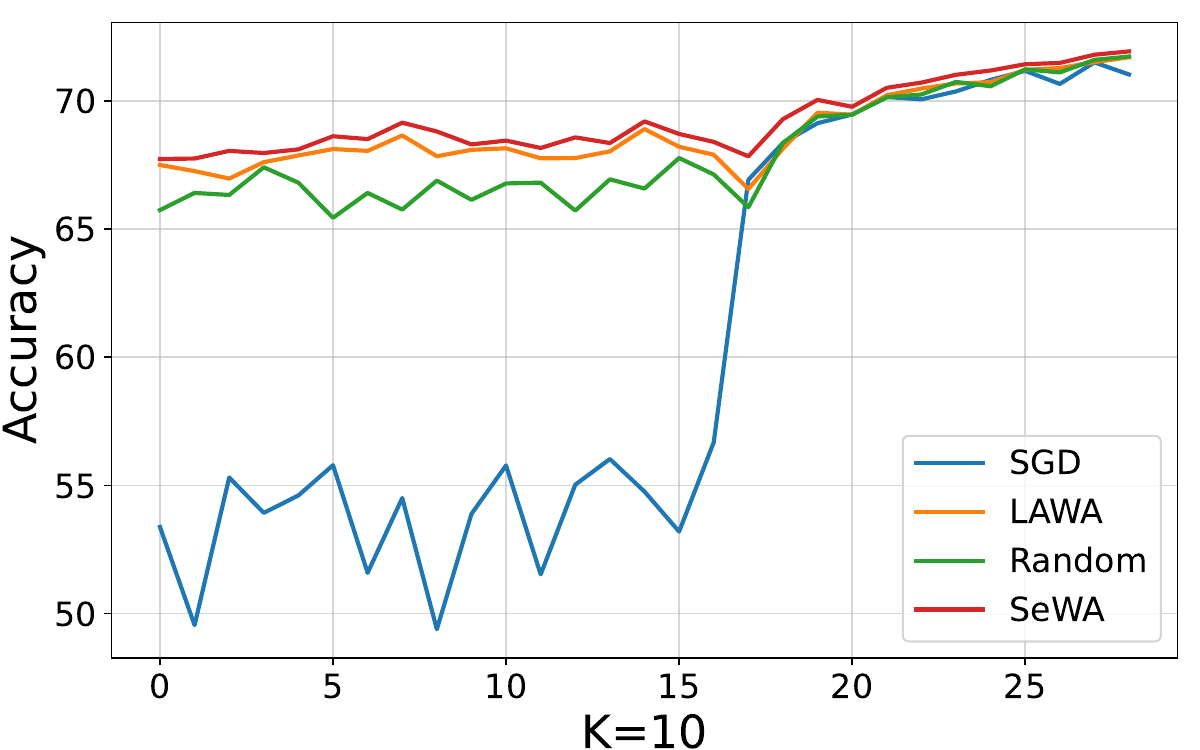}}
    \centering
    \subfigure{
    \centering
    \includegraphics[width=0.3\textwidth]{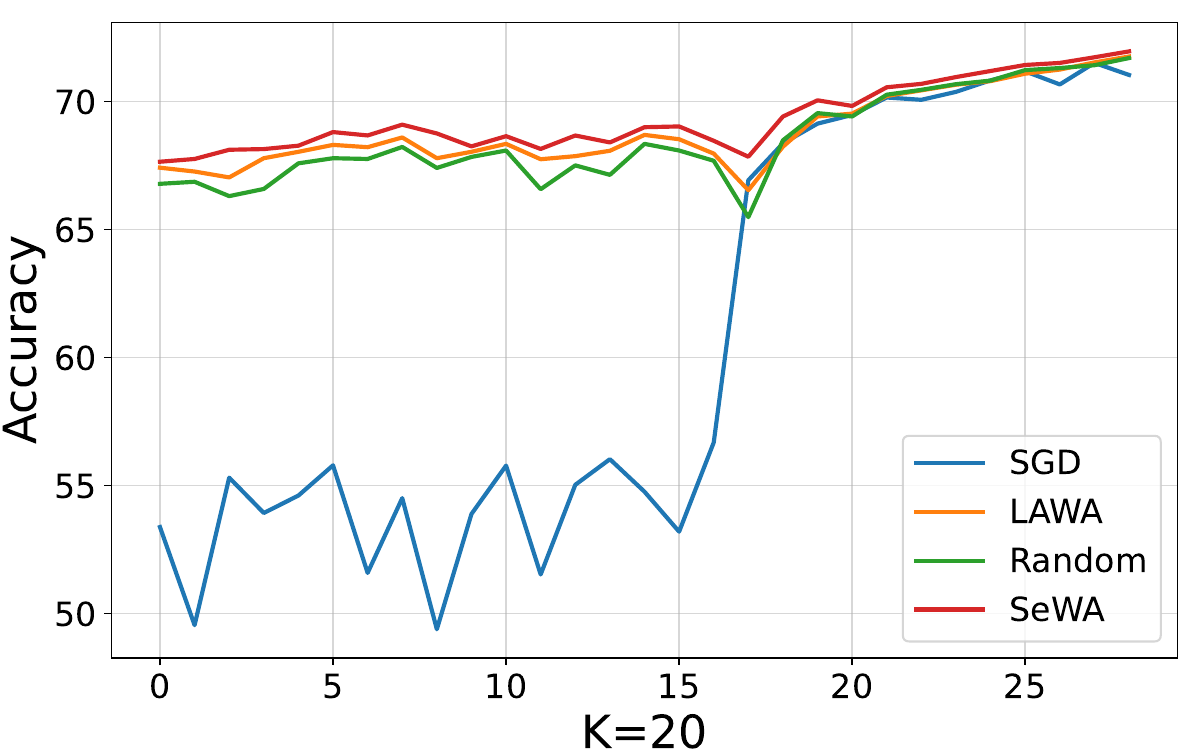}}
    \subfigure{
    \centering
    \includegraphics[width=0.3\textwidth]{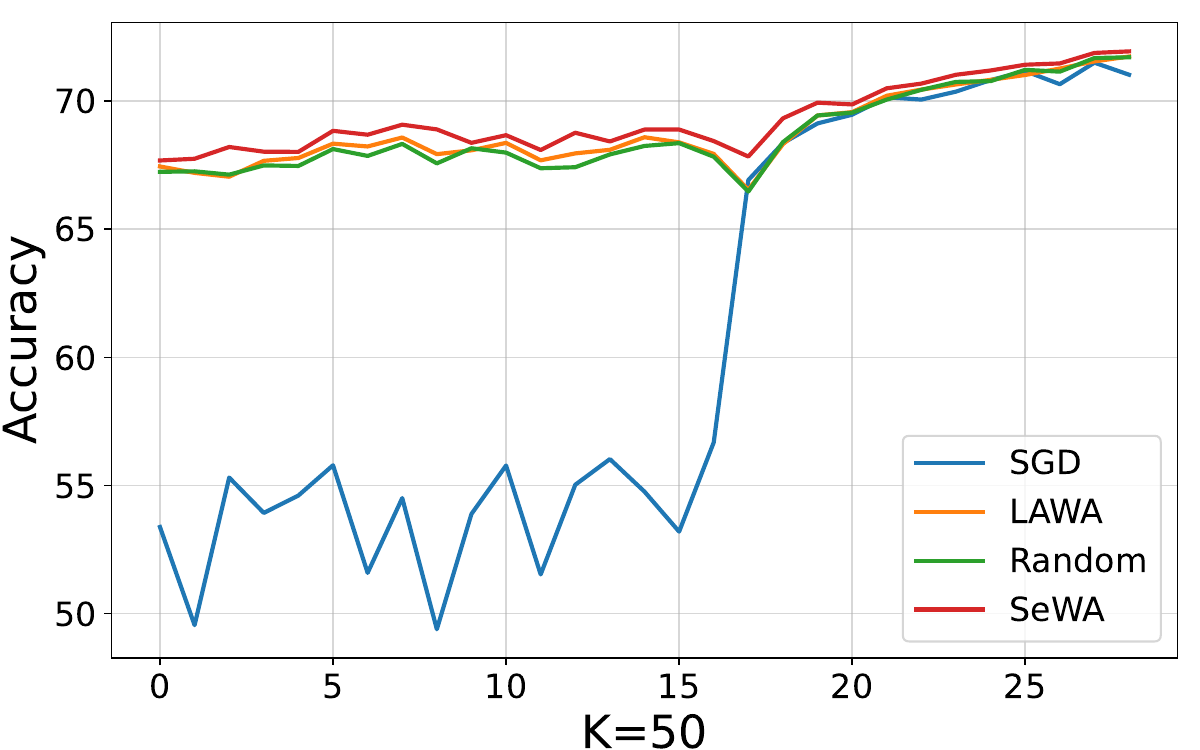}}
    \vspace{-0.2cm}
    \caption{From left to right, the figures illustrate the impact of the hyperparameter $K$ on the CIFAR-100 task. 
    Each data point represents performance based on intervals of 100 checkpoints, with $K$ checkpoints selected from these intervals using various strategies.
    The first row corresponds to a network architecture with 1 block, the second row represents a network with 3 blocks, and the third row depicts results for a network with 5 blocks.
    }
    \label{fig:cifar100-layers}
\end{figure*}

\subsection{Text Classification}

\textbf{Network Architectures.} The network architecture comprises two embedding layers followed by two layers of \textit{TransformerEncoderLayer}. Each \textit{TransformerEncoderLayer} includes a multi-head self-attention mechanism and a position-wise feedforward network, along with layer normalization and residual connections to enhance training stability and gradient flow. The output from the Transformer layers is passed through a linear layer to produce the final predictions.

\textbf{Results.} In addition to the findings presented in Figure \ref{fig:ag}, we conduct further analysis to evaluate the impact of network parameter variations, demonstrating the robustness of our method across networks of varying sizes. These additional results, shown in Figure \ref{fig:ag-layers}, indicate that as the number of Transformer layers increases, the performance of SGD improves up to a certain point. However, beyond this range - where two layers appear sufficient - performance begins to exhibit fluctuations, suggesting diminishing returns and instability with additional layers.

While the improvement achieved by weight averaging is relatively modest due to the simplicity of the task, it still plays a critical role in stabilizing the training process and reducing fluctuations in the training curve. Among the averaging methods evaluated, our proposed method consistently achieves the best performance, underscoring its effectiveness in maintaining stability and optimizing performance, even in scenarios where task complexity is low.

\begin{figure*}[t!]
    \centering
    \subfigure{
    \centering
    \includegraphics[width=0.3\textwidth]{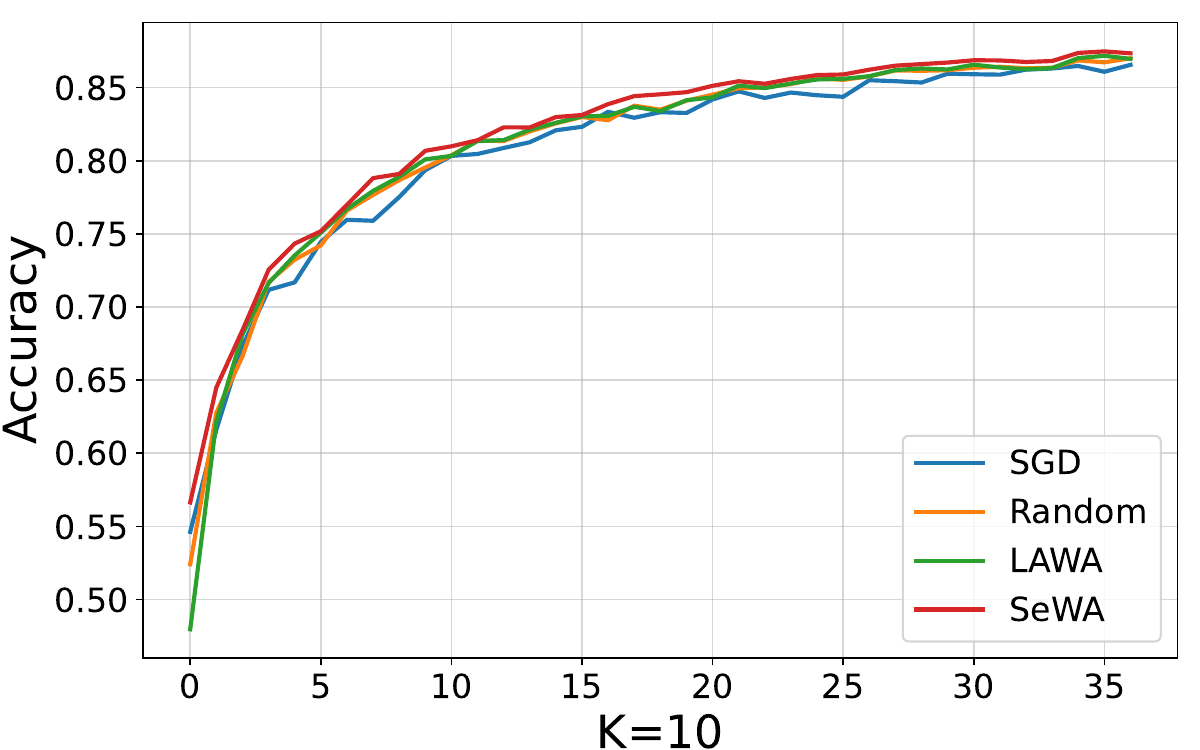}}
    \centering
    \subfigure{
    \centering
    \includegraphics[width=0.3\textwidth]{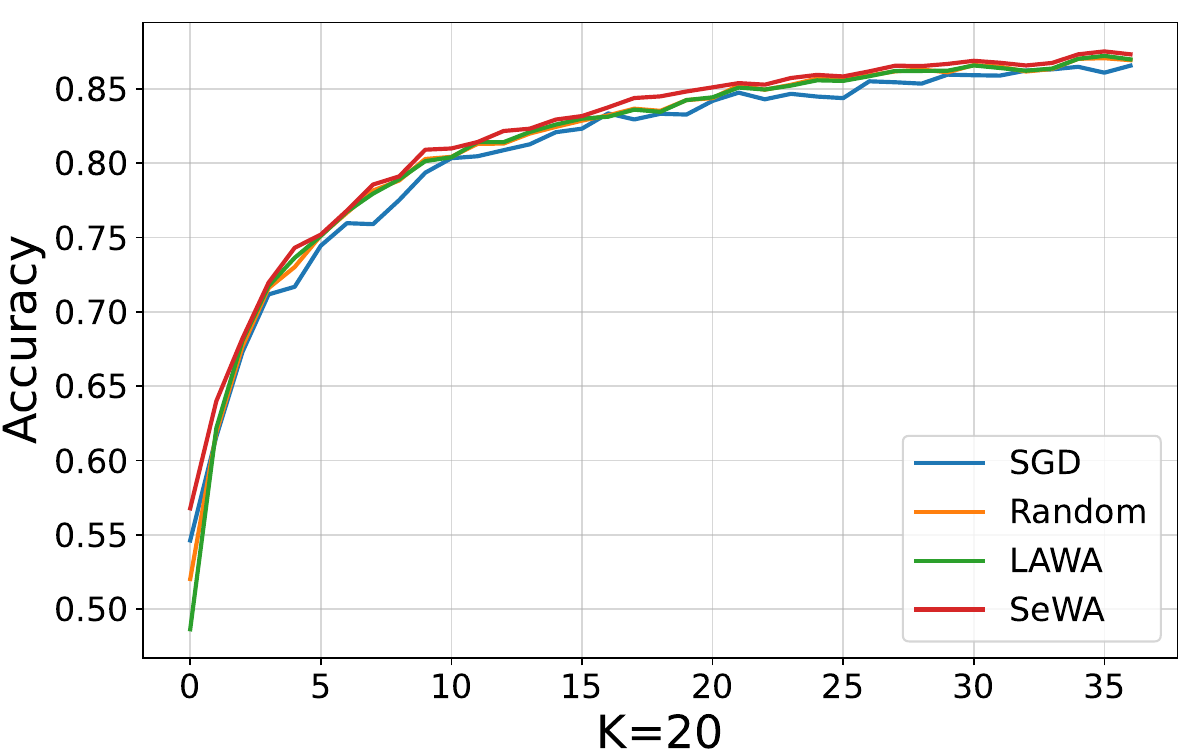}}
    \subfigure{
    \centering
    \includegraphics[width=0.3\textwidth]{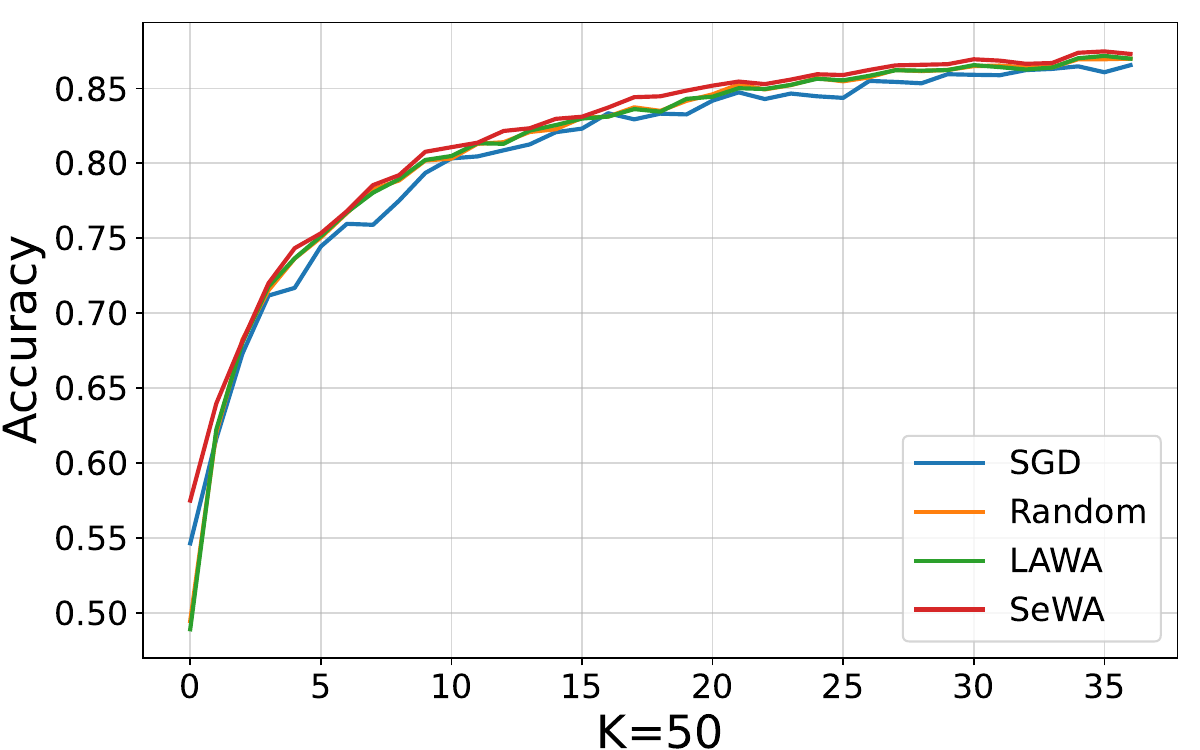}}
    \subfigure{
    \centering
    \includegraphics[width=0.3\textwidth]{figs/NLP-layer2-k10.pdf}}
    \centering
    \subfigure{
    \centering
    \includegraphics[width=0.3\textwidth]{figs/NLP-layer2-k20.pdf}}
    \subfigure{
    \centering
    \includegraphics[width=0.3\textwidth]{figs/NLP-layer2-k50.pdf}}
    \subfigure{
    \centering
    \includegraphics[width=0.3\textwidth]{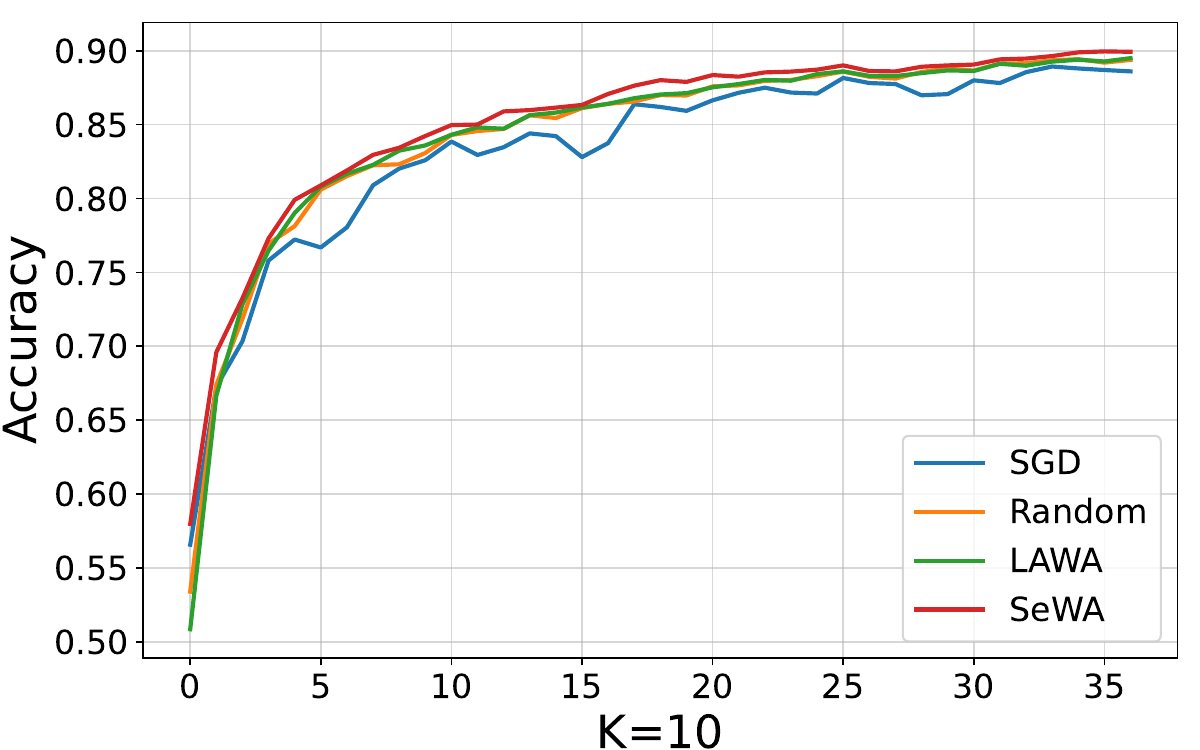}}
    \centering
    \subfigure{
    \centering
    \includegraphics[width=0.3\textwidth]{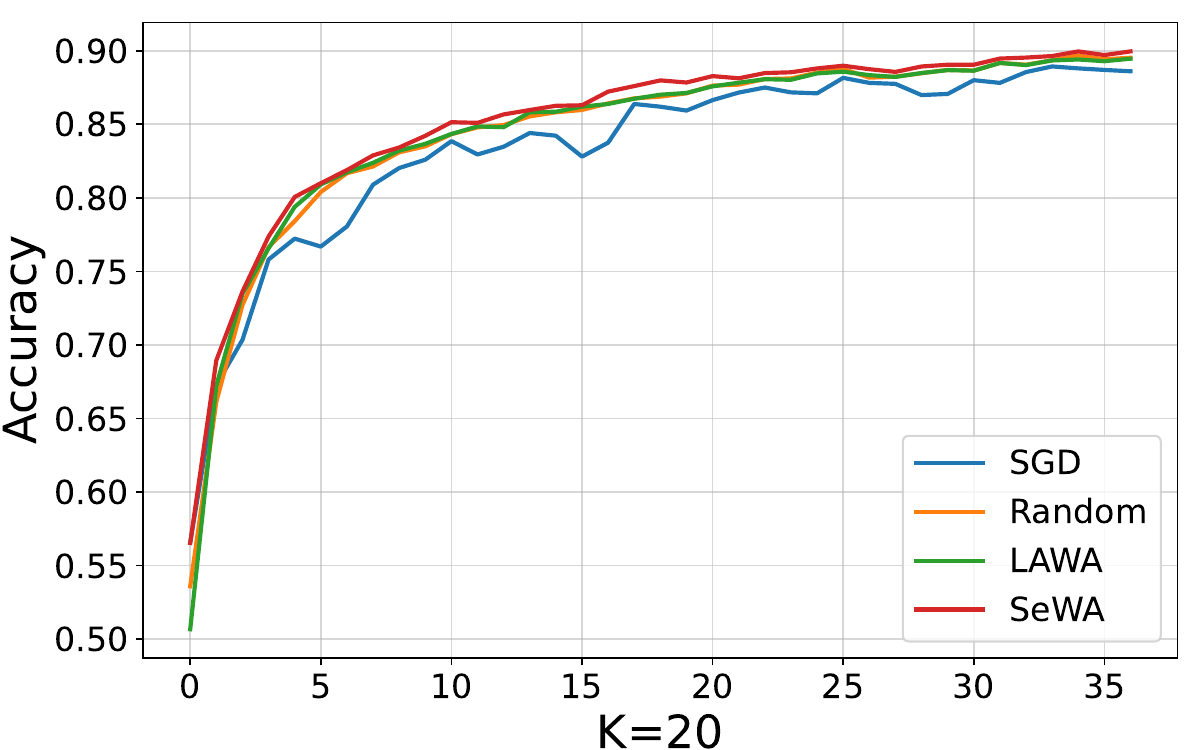}}
    \subfigure{
    \centering
    \includegraphics[width=0.3\textwidth]{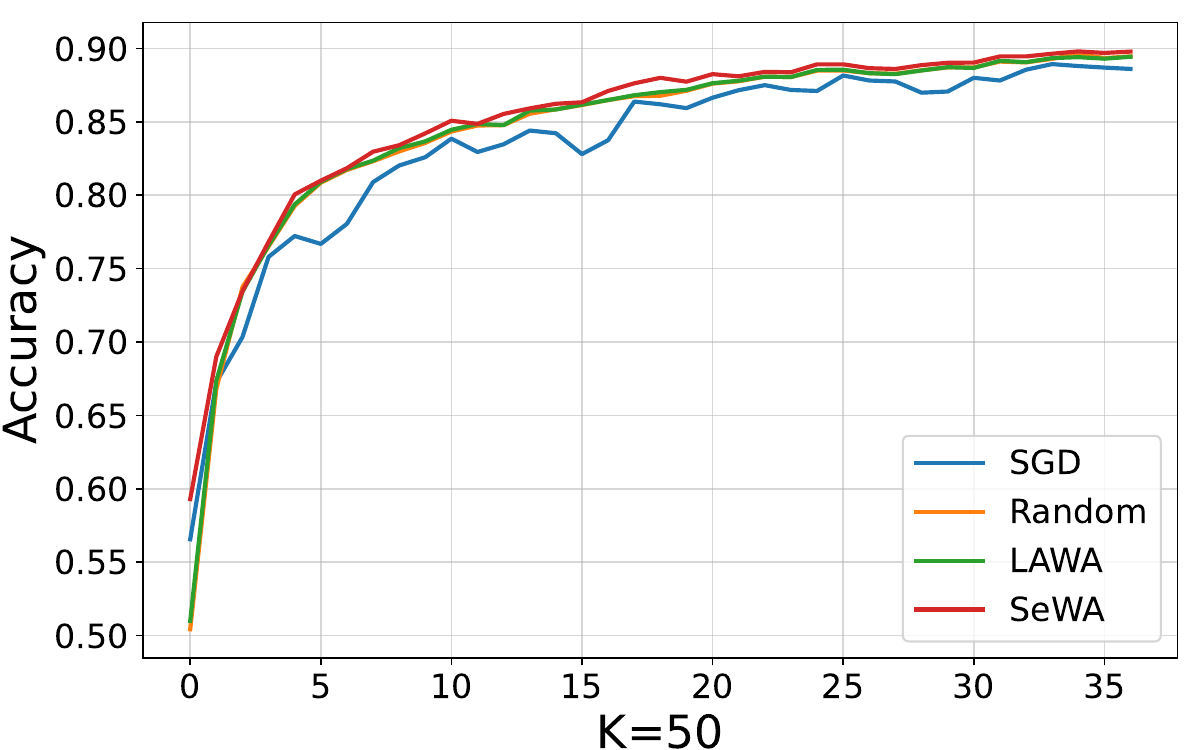}}
    \vspace{-0.2cm}
    \caption{From left to right, the figures illustrate the impact of the hyperparameter $K$ on the AG News corpus. Each point corresponds to intervals of 100 checkpoints, with $K$ checkpoints selected from these intervals using different strategies.
    The first row corresponds to a network architecture with a single \textit{TransformerEncoderLayer}, the second row represents a network with three \textit{TransformerEncoderLayer}s, and the third row shows results for a network with five \textit{TransformerEncoderLayer}s.
    }
    \label{fig:ag-layers}
\end{figure*}

\section{Proof of Lemma \ref{lemma}}\label{pro-lemma}  \paragraph{\boldmath$(1+\alpha\beta)$-expansive.} According to triangle inequality and $\beta$-smoothness,
\begin{equation}\label{eq:app2.1}
     \begin{aligned}
      \Vert w_{T+1} - w_{T+1}^{\prime}\Vert &\leq \Vert w_T- w_T^{\prime}\Vert + \alpha\Vert \nabla F(w_T) -\nabla F(w_T^{\prime})\Vert \\
      &\leq \Vert w_T- w_T^{\prime}\Vert + \alpha\beta \Vert w_T- w_T^{\prime}\Vert \\
      &= (1+\alpha\beta)\Vert w_T- w_T^{\prime}\Vert .
     \end{aligned}
\end{equation}

\paragraph{\emph{Non}-expansive.} Function is convexity and $\beta$-smoothness that implies 
\begin{equation}\label{eq:app2.2}
     \begin{aligned}
      \langle \nabla F(w) -\nabla F(v), w - v \rangle \geq \frac{1}{\beta} \Vert \nabla F(w) -\nabla F(v)\Vert^2 .
     \end{aligned}
\end{equation}
We conclude that
\begin{equation}\label{eq:app2.3}
     \begin{aligned}
      \Vert w_{T+1} - w_{T+1}^{\prime}\Vert &= \sqrt{\Vert w_{T} - \alpha \nabla F(w_{T}) - w_{T}^{\prime} + \alpha \nabla F(w_{T}^{\prime})\Vert^2}  \\
      &=\sqrt{\Vert w_{T} - w_{T}^{\prime} \Vert^2 - 2\alpha\langle \nabla F(w_{T}) -\nabla F(w_{T}^{\prime}), w_T- w_T^{\prime} \rangle +\alpha^2 \Vert \nabla F(w_{T}) - \nabla F(w_{T}^{\prime})\Vert^2} \\
      &\leq \sqrt{\Vert w_T- w_T^{\prime}\Vert^2 - \left(\frac{2\alpha}{\beta} -\alpha^2 \right) \Vert \nabla F(w_{T}) -\nabla F(w_{T}^{\prime})\Vert^2} \\
      &\leq \Vert w_T- w_T^{\prime}\Vert.
     \end{aligned}
\end{equation}

\section{Proof of the generalization bounds}\label{pro-con}
By the Lemma \ref{convex-basic} and \ref{nonconvex-basic}, the proof of Theorem \ref{thm:stability-conv} and \ref{thm:stability-non-with} can be further decomposed into bounding the difference of the parameters for the last $k$ points of the average algorithm. We provide the proof as follows. And you can also find it in \cite{peng2020dfwa}.

\subsection{Update rules of the last $k$ points of the averaging algorithm.}
For the last $k$ points of the averaging algorithm, we formulate it as
\begin{equation}\label{FWA-rules}
    \hat{w}^{k}_{T}=\frac{1}{k} \sum_{i=T-k+1}^{T} w_{i}.
\end{equation}
It is not difficult to find the relationship between $\bar{w}^{k}_{T}$ and $\bar{w}^{k}_{T-1}$, i.e.,
\begin{equation}\label{pro-FWA-update}
    \hat{w}^{k}_{T} = \hat{w}^{k}_{T-1} + \frac{1}{k}\left(w_{T} - w_{T-k}\right) = \hat{w}^{k}_{T-1} - \frac{1}{k}\sum_{i=T-k+1}^{T} \alpha_i\nabla F(w_{i-1},z_i),
\end{equation}
where the second equality follows from the update of SGD.

\subsection{\textbf{Proof. Theorem \ref{thm:stability-conv}}}\label{proof-thm-con-with}
  First, using the relationship between $\hat{w}^{k}_{T}$ and $\hat{w}^{k}_{T-1}$ in Eq. ~\eqref{pro-FWA-update}, we consider that the different sample $z_{T}$ and $z_{T}^{\prime}$ are selected to update with probability $\frac{1}{n}$ at the step $T$.  
\begin{equation}
  \begin{aligned}
   \bar{\delta}_{T} &= \bar{\delta}_{T-1} + \frac{1}{k}\sum_{i=T-k+1}^{T} \alpha_i \Vert\nabla F(w^{\prime}_{i-1},z_i) - \nabla F(w_{i-1},z_i) \Vert \\
   &\leq \bar{\delta}_{T-1} + \frac{2\alpha_T L}{k} + \frac{1}{k}\sum_{i=T-k+1}^{T-1} \alpha_i \Vert\nabla F(w^{\prime}_{i-1},z_i) - \nabla F(w_{i-1},z_i) \Vert ,
  \end{aligned}
 \end{equation}
where the proof follows from the triangle inequality and $L$-Lipschitz condition. For $\frac{1}{k}\sum_{i=T-k+1}^{T-1} \alpha_i \Vert\nabla F(w^{\prime}_{i-1},z_i) - \nabla F(w_{i-1},z_i) \Vert$ will be controlled in the late.

Second, another situation need be considered in case of the same sample are selected$(z_{T}=z_{T}^{\prime})$ to update with probability $1-\frac{1}{n}$ at the step $T$. 
\begin{equation}
  \begin{aligned}
   \bar{\delta}_{T} &= \bar{\delta}_{T-1} + \frac{1}{k}\sum_{i=T-k+1}^{T} \alpha_i \Vert\nabla F(w^{\prime}_{i-1},z_i) - \nabla F(w_{i-1},z_i) \Vert \\
   &\leq \bar{\delta}_{T-1} + \frac{1}{k}\sum_{i=T-k+1}^{T-1} \alpha_i\Vert\nabla F(w^{\prime}_{i-1},z_i) - \nabla F(w_{i-1},z_i) \Vert ,
  \end{aligned}
 \end{equation}
where $\Vert\nabla F(w^{\prime}_{T-1},z_T)-\nabla F(w_{T-1},z_T)\Vert=0$ in the second inequality because the non-expansive property of convex function.

For each $\Vert\nabla F(w^{\prime}_{i-1},z_i)-\nabla F(w_{i-1},z_i)\Vert$ in the sense of expectation, We consider two situations using $\alpha L$ bound and the non-expansive property. Then, we get  
  \begin{equation}
    \frac{1}{k}\sum_{i=T-k+1}^{T-1}\alpha_i \Vert\nabla F(w^{\prime}_{i-1},z_i) - \nabla F(w_{i-1},z_i) \Vert \leq \frac{2L}{nk}\sum_{i=T-k+1}^{T-1}\alpha_i.
 \end{equation}

Then we obtain the expectation based on the above analysis 
  \begin{equation}
  \begin{aligned}
    \mathbb{E}\left[\bar{\delta}_{T}\right] &\leq (1-\frac{1}{n})\bar{\delta}_{T-1} + \frac{1}{n}\left(\bar{\delta}_{T-1}+\frac{2\alpha_T L}{k}\right) + \frac{2L}{nk}\sum_{i=T-k+1}^{T-1}\alpha_i\\
    &\leq \mathbb{E}\left[\bar{\delta}_{T-1}\right] + \frac{2L}{nk}\sum_{i=T-k+1}^{T}\alpha_i
  \end{aligned}
 \end{equation}
recursively, we can get 
    \begin{equation}
     \begin{aligned}
      \mathbb{E}\left[\bar{\delta}_{T}\right]&\leq \frac{2L}{nk} \left( \sum_{i=T-k+1}^{T}\alpha_i + \sum_{i=T-k}^{T-1}\alpha_i + \cdots + \sum_{i=1}^{k}\alpha_i \right) \\ & + \frac{2L}{nk} \left( \sum_{i=1}^{k-1}\alpha_i + \sum_{i=1}^{k-2}\alpha_i + \cdots + \sum_{i=1}^{1}\alpha_i \right). \\
     \end{aligned}
    \end{equation}
Let $\alpha_{i,j}=\alpha$, we get
    \begin{equation}
     \begin{aligned}
      \mathbb{E}\left[\bar{\delta}_{T}\right] = \frac{2\alpha L}{n} \left( T - \frac{k}{2} \right).
     \end{aligned}
    \end{equation}
Plugging this back into Eq.~\eqref{convex-basic}, we obtain
 \begin{equation}\label{eq:2.2.1}
  \epsilon_{gen} = \mathbb{E}\vert F(\bar{w}_T^K;z)-F(\bar{w}^{K\prime}_T;z)\vert \leq \frac{2\alpha L^2 s}{n} \left( T - \frac{k}{2} \right).
 \end{equation}
And we finish the proof.

\subsection{Proof of Lemma \ref{nonconvex-basic}}\label{proof-noncon-basic}
We consider that $S$ and $S^\prime$ are two samples of size $n$ differing in only a single example. Let $\xi$ denote the event $\bar{\delta}_{t_0}=0$. Let $z$ be an arbitrary example and consider the random variable $I$ assuming the index of the first time step using the different sample. then we have
    \begin{equation}
     \begin{aligned}
      \mathbb{E}\vert \nabla F(\bar{w}_T^{K};z)-\nabla F(\bar{w}^{K\prime}_T;z)\vert &= P\left\lbrace \xi\right\rbrace \mathbb{E}[\vert \nabla F(\bar{w}_T^{K};z)-\nabla F(\bar{w}^{K\prime}_T;z)\vert|\xi]\\
      &+P\left\lbrace \xi^{c}\right\rbrace E[\vert \nabla F(\bar{w}_T^{K};z)-\nabla F(\bar{w}^{K\prime}_T;z)\vert |\xi^{c}]\\
      &\leq P\left\lbrace I\geq t_0\right\rbrace \cdot \mathbb{E}[\vert \nabla F(\bar{w}_T^{K};z)-\nabla F(\bar{w}^{K\prime}_T;z)\vert |\xi] \\
      &+P\left\lbrace I\leq t_0\right\rbrace \cdot \mathop{sup}_{\bar{w}^{K},z} F(\bar{w}^{K};z),\\
     \end{aligned}
    \end{equation}
where $\xi^{c}$ denotes the complement of $\xi$.   

Note that Note that when $I\geq t_0$, then we must have that $\bar{\delta}_{t_0}=0$, since the execution on $S$ and $S^{\prime}$ is identical until step $t_0$. We can get $LE[\Vert\bar{w}_{T}^{K} - \bar{w}_{T}^{K\prime}\Vert|\xi]$ combined the Lipschitz continuity of $F$. Furthermore, we know $P\left\lbrace \xi^{c}\right\rbrace=P\left\lbrace \bar{\delta}_{t_0}=0\right\rbrace\leq P\left\lbrace I\leq t_0\right\rbrace$, for the random selection rule, we have 
    \begin{equation}
     \begin{aligned}
      P\left\lbrace I\leq t_0\right\rbrace \leq \sum_{t=1}^{t_0} P\left\lbrace I = t_0\right\rbrace = \frac{t_0}{n}.
     \end{aligned}
    \end{equation}
We can combine the above two parts and $F \in [0,1]$ to derive 
the stated bound $L\mathbb{E}[\Vert\bar{w}_{T}^{k} - \bar{w}_{T}^{k\prime}\Vert\vert\xi]+\frac{t_0}{n}$, which completes the proof.

\subsection{Lemma \ref{Lemma_noncon} and it's proof}\label{proof-Lemma_noncon}
\begin{lemma}\label{Lemma_noncon}
Assume that $F$ is $\beta$-smooth and $non$-convex. Let $\alpha = \frac{c}{T}$, we have 
  \begin{equation}
  \begin{aligned}
   \Vert w^{\prime}_{T}& - w_{T} \Vert \leq &e^\frac{c\beta k}{T}\bar{\delta}_{T},
  \end{aligned}
 \end{equation}
\end{lemma}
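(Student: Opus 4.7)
The plan is to exploit the $(1+\alpha\beta)$-expansive update from Lemma \ref{lemma}(2) to relate the terminal deviation $\Vert w_T - w_T'\Vert$ to each of the individual deviations $\Vert w_i - w_i'\Vert$ appearing in $\bar{\delta}_T$, and then to translate this collection of pointwise bounds into one that compares $\Vert w_T - w_T'\Vert$ against the full average. The key observation is that the expansive property runs forward in time, so $\Vert w_i - w_i'\Vert$ must be at least $(1+\alpha\beta)^{-(T-i)}$ times $\Vert w_T - w_T'\Vert$, which means every term entering $\bar{\delta}_T$ already carries a nontrivial fraction of $\Vert w_T - w_T'\Vert$; averaging can therefore only shrink things by a controlled factor.

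Concretely, first I would apply the $(1+\alpha\beta)$-expansive inequality iteratively along the trajectory to deduce, for each index $i \in \{T-k+1,\ldots,T\}$,
\begin{equation*}
\Vert w_T - w_T'\Vert \leq (1+\alpha\beta)^{T-i}\,\Vert w_i - w_i'\Vert.
\end{equation*}
Rearranging and summing over the last $k$ indices yields
\begin{equation*}
k\,\bar{\delta}_T \;=\; \sum_{i=T-k+1}^{T}\Vert w_i - w_i'\Vert \;\geq\; \Vert w_T - w_T'\Vert \sum_{j=0}^{k-1}(1+\alpha\beta)^{-j} \;\geq\; k\,(1+\alpha\beta)^{-k}\,\Vert w_T - w_T'\Vert,
\end{equation*}
where the last step uses that every term of the geometric sum is at least $(1+\alpha\beta)^{-(k-1)} \geq (1+\alpha\beta)^{-k}$.

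Rearranging gives $\Vert w_T - w_T'\Vert \leq (1+\alpha\beta)^k\,\bar{\delta}_T$, and applying the elementary inequality $1+x \leq e^{x}$ together with the step size choice $\alpha = c/T$ produces the claimed factor $(1 + c\beta/T)^{k} \leq e^{c\beta k/T}$.

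The main obstacle I anticipate is justifying the iterated use of the $(1+\alpha\beta)$-expansive bound across steps where the samples $z_i$ can differ between the two coupled trajectories on $S$ and $S'$. In the proof of Theorem \ref{thm:stability-non-with}, this lemma is invoked inside the conditional expectation $\mathbb{E}[\,\cdot\,\mid \bar{\delta}_{t_0}=0]$, so one should either argue that only the shared sample draws drive the pointwise recursion here (with the divergent-sample contributions absorbed separately in the main recursion of Appendix \ref{proof-thm-non-with}), or treat the statement as a worst-case deterministic bound along any coupled realization on which the non-expansive gradient step is applied. Modulo this bookkeeping, the rest is a short geometric-series computation.
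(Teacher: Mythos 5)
Your proposal is correct and follows essentially the same route as the paper: both arguments iterate the $(1+\alpha\beta)$-expansive property to get $\Vert w_T-w_T'\Vert\le(1+\alpha\beta)^{T-i}\Vert w_i-w_i'\Vert$ for each of the last $k$ indices, then average (the paper replaces $k$ copies of $\Vert w_T-w_T'\Vert$ by these bounds, you equivalently invert and lower-bound the geometric sum), arriving at the same factor $(1+c\beta/T)^{k}\le e^{c\beta k/T}$. Your closing caveat about differing samples across the coupled trajectories is a legitimate gap that the paper's own proof also leaves implicit, so it does not distinguish the two arguments.
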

where $\bar{\delta}_{T}= \frac{1}{k} \sum_{i=T-k+1}^{T}\Vert w^{\prime}_{i} - w_{i} \Vert$.

\textbf{proof Lemma \ref{Lemma_noncon}.} 
By triangle inequality and our assumption that $F$ satisfies, we have
 \begin{equation}
  \begin{aligned}
   \Vert w^{\prime}_{T}& - w_{T} \Vert = \frac{1}{k} \cdot k \cdot \Vert w^{\prime}_{T} - w_{T} \Vert \\
   \leq & \frac{1}{k} ( \Vert w^{\prime}_{T} - w_{T} \Vert + (1+\alpha_{T-1}\beta)\Vert w^{\prime}_{T-1} - w_{T-1} \Vert + \cdots + \\&(1+\alpha_{T-1}\beta)(1+\alpha_{T-2}\beta)\cdots(1+\alpha_{T-k+1}\beta)\Vert w^{\prime}_{T-k+1} - w_{T-k+1} \Vert ) \\
   \leq & \prod_{t=T-k+1}^{T} (1+\alpha_t\beta)\left(\frac{1}{k} \sum_{i=T-k+1}^{T}\Vert w^{\prime}_{i} - w_{i} \Vert\right).
    \end{aligned}
 \end{equation}
Let $\alpha_t = \alpha = \frac{C}{T}$, we have
  \begin{equation}
  \begin{aligned}
   \Vert w^{\prime}_{T}& - w_{T} \Vert \leq &\prod_{t=T-k+1}^{T} (1+\alpha_t\beta)\bar{\delta}_{T} \leq \left(1+ \frac{c\beta}{T}\right)^k\bar{\delta}_{T} \leq e^\frac{c\beta k}{T}\bar{\delta}_{T}.
  \end{aligned}
 \end{equation}

\subsection{\textbf{Proof. Theorem \ref{thm:stability-non-with} (Based on the constant learning rate)}}
\label{proof-thm-non-with} In the case of non-convex, the $(1+\alpha\beta)$-expansive properties and $L$-Lipschitz conditions are used in our proof. Based on the relationship between $\hat{w}^{k}_T$ and $\hat{w}^{k}_{T-1}$ in Eq. ~\eqref{pro-FWA-update}. We consider that the different samples $z_T$ and $z^{\prime}_T$ are selected to update with probability $\frac{1}{n}$ at step T.
\begin{equation}
  \begin{aligned}
   \bar{\delta}_{T} &= \bar{\delta}_{T-1} + \frac{1}{k}\sum_{i=T-k+1}^{T} \alpha \Vert\nabla F(w^{\prime}_{i-1},z_i) - \nabla F(w_{i-1},z_i) \Vert \\
   &\leq \bar{\delta}_{T-1} + \frac{2\alpha L}{k} + \frac{1}{k}\sum_{i=T-k+1}^{T-1} \alpha \Vert\nabla F(w^{\prime}_{i-1},z_i) - \nabla F(w_{i-1},z_i) \Vert ,
  \end{aligned}
 \end{equation}
Next, the same sample $z=z^{\prime}$ is selected to update with probability $1-\frac{1}{n}$ at step T.
\begin{equation}
  \begin{aligned}
   \bar{\delta}_{T} &= \bar{\delta}_{T-1} + \frac{1}{k}\sum_{i=T-k+1}^{T} \alpha \Vert\nabla F(w^{\prime}_{i-1},z_i) - \nabla F(w_{i-1},z_i) \Vert \\
   &\leq \bar{\delta}_{T-1} + \frac{\alpha \beta}{k}\Vert w^{\prime}_{T-1} - w_{T-1} \Vert + \frac{1}{k}\sum_{i=T-k+1}^{T-1} \alpha \Vert\nabla F(w^{\prime}_{i-1},z_i) - \nabla F(w_{i-1},z_i) \Vert \\
   &\leq (1+\frac{\alpha \beta(1+\alpha \beta)^{k-1}}{k})\bar{\delta}_{T-1} + \frac{1}{k}\sum_{i=T-k+1}^{T-1} \alpha \Vert\nabla F(w^{\prime}_{i-1},z_i) - \nabla F(w_{i-1},z_i) \Vert,
  \end{aligned}
 \end{equation}
where the proof follows from the $\beta$-smooth and Lemma \ref{Lemma_noncon}. Then, we bound the $\alpha \Vert\nabla F(w^{\prime}_{T-2},z_{T-1}) - \nabla F(w_{T-2},z_{T-1}) \Vert$ with different sampling. 
 \begin{equation}\label{noncon-sigbound}
  \begin{aligned}    
    \alpha\Vert\nabla &F(w^{\prime}_{T-2},z_{T-1}) - \nabla F(w_{T-2},z_{T-1}) \Vert = \frac{2\alpha L}{n} + \left(1-\frac{1}{n}\right)\alpha\beta\Vert w_{T-2} - w^{\prime}_{T-2} \Vert\\
    &\leq \frac{2\alpha L}{n} + \alpha\beta\left(\Vert w_{T-3} - w^{\prime}_{T-3} \Vert + \alpha \Vert \nabla F(w^{\prime}_{T-3},z_{T-2})-\nabla F^{\prime}(w_{T-3},z_{T-2}) \Vert\right) \\
    &\leq \frac{2\alpha L}{n} + \alpha\beta\left(\frac{2\alpha L}{n} + (1+\alpha\beta)\Vert w_{T-3} - w^{\prime}_{T-3} \Vert\right) \\
    &\cdots\\
    &\leq \frac{2\alpha L}{n}(1+\alpha\beta)^{T-2-t_0} + \alpha\beta(1+\alpha\beta)^{T-2-t_0}\Vert w_{t_{0}} - w^{\prime}_{t_{0}} \Vert =\frac{2\alpha L}{n}(1+\alpha\beta)^{T-2-t_0},
  \end{aligned}
 \end{equation}
where $w_{t_{0}} = w^{\prime}_{t_{0}}$. Therefore, we can obtain the bound for $\frac{1}{k}\sum_{i=T-k+1}^{T-1} \alpha \Vert\nabla F(w^{\prime}_{i-1},z_i) - \nabla F(w_{i-1},z_i) \Vert$ in the expectation sense.
\begin{equation}\label{44}
    \begin{aligned}
     \frac{\alpha}{k}\sum_{i=T-k+1}^{T-1} & \mathbb{E} \Vert\nabla F(w^{\prime}_{i-1},z_i) - \nabla F(w_{i-1},z_i) \Vert 
     \leq \frac{2\alpha L}{nk} \sum_{i=T-k}^{T-2} (1+\alpha\beta)^{i-t_{0}} \\
     &\leq \frac{2\alpha L}{nk} \cdot k (1+\alpha\beta)^{T} \leq \frac{2\alpha L(1+\alpha\beta)^{T}}{n}.  
    \end{aligned}
\end{equation}
Then, we obtain the expectation considering the above analysis 
 \begin{equation}
    \begin{aligned}
     \mathbb{E}\left[\bar{\delta}_{T+1}\right] &\leq (1-\frac{1}{n})\left(1+\frac{\alpha \beta(1+\alpha \beta)^{k-1}}{k}\right)\bar{\delta}_T + \frac{1}{n}\left(\bar{\delta}_T+\frac{2\alpha L}{k}\right) + \frac{2\alpha L(1+\alpha\beta)^{T}}{n}\\ 
     &\leq \left(\frac{1}{n}+(1-\frac{1}{n})\left(1+\frac{\alpha \beta(1+\alpha \beta)^{k-1}}{k}\right)\right)\bar{\delta}_{T} + \frac{2\alpha L}{nk}\left(1+k(1+\alpha\beta)^{T}\right)\\
    \end{aligned}
   \end{equation}
let $\alpha = \frac{c}{t}$, then
  \begin{equation}
 \begin{aligned}
     &= \left(1+(1-\frac{1}{n})\frac{c\beta(1+\frac{c\beta}{t})^{k}}{kt}\right) \bar{\delta}_{t} + \frac{2cL}{nkt}\left(1+k(1+\frac{c\beta}{t})^{t}\right)\\
     &\leq \exp\left((1-\frac{1}{n})\frac{c\beta e^{\frac{c\beta k}{t}}}{kt}\right) \bar{\delta}_{t} + \frac{2cL}{kn}\cdot\frac{1+k e^{c\beta}}{t}.
    \end{aligned}
   \end{equation}
Here we used that $\lim\limits_{x\to\infty}(1+\frac{1}{x})^x=e$ and $\lim\limits_{x\to\infty}e^\frac{1}{x}=1$. 
Using the fact that $\bar{\delta}_{t_0}=0$, we can unwind this recurrence relation from $T$ down to $t_0+1$.
  \begin{equation}
    \begin{aligned}
     \mathbb{E}\bar{\delta}_{t} &\leq \sum_{t=t_0 +1}^{T} \left( \prod_{m=t+1}^{T}\exp\left((1-\frac{1}{n})\frac{c\beta}{km}\right)\right)\frac{2cL}{kn}\cdot\frac{1+k e^{c\beta}}{t}\\
     &= \sum_{t=t_0 +1}^{T} \exp\left(\frac{(1-\frac{1}{n})c\beta}{k} \sum_{m=t+1}^{T}\frac{1}{m}\right)\frac{2cL}{kn}\cdot\frac{1+k e^{c\beta}}{t}\\
     &\leq \sum_{t=t_0 +1}^{T} \exp\left( \frac{(1-\frac{1}{n})c\beta}{k} \cdot \log(\frac{T}{t}) \right)\frac{2cL}{kn}\cdot\frac{1+k e^{c\beta}}{t}\\
     &\leq T^{\frac{(1-\frac{1}{n})c\beta}{k}} \cdot \sum_{t=t_0 +1}^{T} \left(\frac{1}{t}\right)^{\frac{(1-\frac{1}{n})c\beta}{k}+1} \cdot \frac{2cL(1+ke^{c\beta})}{kn}\\
     &\leq \frac{k}{(1-\frac{1}{n})c\beta} \cdot \frac{2cL(1+ke^{c\beta})}{kn} \cdot \left(\frac{T}{t_0}\right)^{\frac{(1-\frac{1}{n})c\beta}{k}}\\
     &\leq \frac{2L(1+ke^{c\beta})}{(n-1)\beta} \cdot \left(\frac{T}{t_0}\right)^{\frac{c\beta}{k}}.
    \end{aligned}
   \end{equation}
Plugging this back into Eq.~\eqref{nonconvex-basic}, we obtain
 \begin{equation}\label{with-con}
  \mathbb{E}\vert F(\bar{w}_T^{K};z)-F(\bar{w}^{K\prime}_T;z)\vert \leq \frac{t_0}{n} + \frac{2sL^2(1+ke^{c\beta})}{(n-1)\beta} \cdot \left(\frac{T}{t_0}\right)^{\frac{c\beta}{k}}.
 \end{equation}
By taking the extremum, we obtain the minimum  
 \begin{equation}\label{with-con-t_0}
    t_0 = \left(2csL^2(1+ke^{c\beta})k^{-1}\right)^{\frac{k}{c\beta+k}}\cdot T^{\frac{c\beta}{c\beta+k}}
   \end{equation}
finally, this setting gets
 \begin{equation}\label{with-con-result}
  \epsilon_{gen} = \mathbb{E}\vert F(\bar{w}_T^{K};z)-F(\bar{w}^{K\prime}_T;z)\vert \leq \frac{1+\frac{1}{c\beta}}{n-1}\left(2csL^2(1+ke^{c\beta})k^{-1}\right)^{\frac{k}{c\beta+k}}\cdot T^{\frac{c\beta}{c\beta+k}},
 \end{equation}
to simplify, omitting constant factors that depend on $\beta$, c
and L, we get 
   \begin{equation}
    \epsilon_{gen}  \leq \mathcal{O}_s\left(\frac{T^{\frac{c\beta}{c\beta+k}}}{n}\right).
   \end{equation}
And we finish the proof.


\end{document}